\definecolor{deepred}{rgb}{0.66, 0.0, 0.0} % 自定义深红色
\theoremstyle{plain}
\newtheorem{theorem}{Theorem}
\newtheorem{lemma}{Lemma}
\newtheorem{corollary}{Corollary}
\theoremstyle{definition}
\theoremstyle{remark}
  \providecommand\BibTeX{{%
    \normalfont B\kern-0.5em{\scshape i\kern-0.25em b}\kern-0.8em\TeX}}}
\begin{document}

%%
%% The "title" command has an optional parameter,
%% allowing the author to define a "short title" to be used in page headers.
\title{Enhancing Monotonic Modeling with Spatio-Temporal Adaptive Awareness in Diverse Marketing}

%%
%% The "author" command and its associated commands are used to define
%% the authors and their affiliations.
%% Of note is the shared affiliation of the first two authors, and the
%% "authornote" and "authornotemark" commands
%% used to denote shared contribution to the research.

\author{Bin Li, HengXu He, Jiayan Pei, Feiyang Xiao, Yifan Zhao, Zhixing Zhang, Diwei Liu, Jia Jia}
% \authornote{Both authors contributed equally to this research.}
% \orcid{1234-5678-9012}
% \author{G.K.M. Tobin}
% \authornotemark[1]
% \email{webmaster@marysville-ohio.com}
\affiliation{%
  \institution{Alibaba Group}
  \streetaddress{P.O. Box 1212}
  \city{Hangzhou}
  \state{Zhejiang}
  \country{China}
  \postcode{43017-6221}}
\email{{ningzhu.lb,hengxu.hhx,jiayanpei.pjy,feiyangxiao.xfy,zhang.zzx,jj229618}@alibaba-inc.com,{xuefan.zyf,diwei.ldw}@koubei.com}

\renewcommand{\shortauthors}{Bin Li and HengXu He, et al.}

%%
%% The abstract is a short summary of the work to be presented in the
%% article.
\begin{abstract}
In the mobile internet era, the Online Food Ordering Service (OFOS) emerges as an integral component of inclusive finance owing to the convenience it brings to people. OFOS platforms offer dynamic allocation incentives to users and merchants through diverse marketing campaigns to encourage payments while maintaining the platforms' budget efficiency.
Despite significant progress, the marketing domain continues to face two primary challenges: (i) how to allocate a limited budget with greater efficiency, demanding precision in predicting users' monotonic response (i.e. sensitivity) to incentives, and (ii) ensuring spatio-temporal adaptability and robustness in diverse marketing campaigns across different times and locations.
To address these issues, we propose a \textit{Constrained Monotonic Adaptive Network} (CoMAN) method for spatio-temporal perception within marketing pricing. Specifically, we capture spatio-temporal preferences within attribute features through two foundational spatio-temporal perception modules. To further enhance catching the user sensitivity differentials to incentives across varied times and locations, we design modules for learning spatio-temporal convexity and concavity as well as for expressing sensitivity functions.
CoMAN can achieve a more efficient allocation of incentive investments during pricing, thus increasing the conversion rate and orders while maintaining budget efficiency. Extensive offline and online experimental results within our diverse marketing campaigns demonstrate the effectiveness of the proposed approach while outperforming the monotonic \textit{state-of-the-art} method.
\end{abstract}

%%
%% The code below is generated by the tool at http://dl.acm.org/ccs.cfm.
%% Please copy and paste the code instead of the example below.
%%
% \begin{CCSXML}
% <ccs2012>
%  <concept>
%   <concept_id>00000000.0000000.0000000</concept_id>
%   <concept_desc>Do Not Use This Code, Generate the Correct Terms for Your Paper</concept_desc>
%   <concept_significance>500</concept_significance>
%  </concept>
%  <concept>
%   <concept_id>00000000.00000000.00000000</concept_id>
%   <concept_desc>Do Not Use This Code, Generate the Correct Terms for Your Paper</concept_desc>
%   <concept_significance>300</concept_significance>
%  </concept>
%  <concept>
%   <concept_id>00000000.00000000.00000000</concept_id>
%   <concept_desc>Do Not Use This Code, Generate the Correct Terms for Your Paper</concept_desc>
%   <concept_significance>100</concept_significance>
%  </concept>
%  <concept>
%   <concept_id>00000000.00000000.00000000</concept_id>
%   <concept_desc>Do Not Use This Code, Generate the Correct Terms for Your Paper</concept_desc>
%   <concept_significance>100</concept_significance>
%  </concept>
% </ccs2012>
% \end{CCSXML}

% \ccsdesc[500]{Information systems~Spatial-temporal systems; Online advertising}
\begin{CCSXML}
<ccs2012>
   <concept>
       <concept_id>10002951.10003227.10003236.10003101</concept_id>
       <concept_desc>Information systems~Location based services</concept_desc>
       <concept_significance>500</concept_significance>
       </concept>
 </ccs2012>
\end{CCSXML}

\ccsdesc[500]{Information systems~Location based services}
% \ccsdesc[300]{Do Not Use This Code~Generate the Correct Terms for Your Paper}
% \ccsdesc{Do Not Use This Code~Generate the Correct Terms for Your Paper}
% \ccsdesc[100]{Do Not Use This Code~Generate the Correct Terms for Your Paper}

%%
%% Keywords. The author(s) should pick words that accurately describe
%% the work being presented. Separate the keywords with commas.
\keywords{Monotonic Modeling; Online Food Ordering; Marketing Policy; Budget Allocation}

%% A "teaser" image appears between the author and affiliation
%% information and the body of the document, and typically spans the
%% page.
% \begin{teaserfigure}
%   \includegraphics[width=\textwidth]{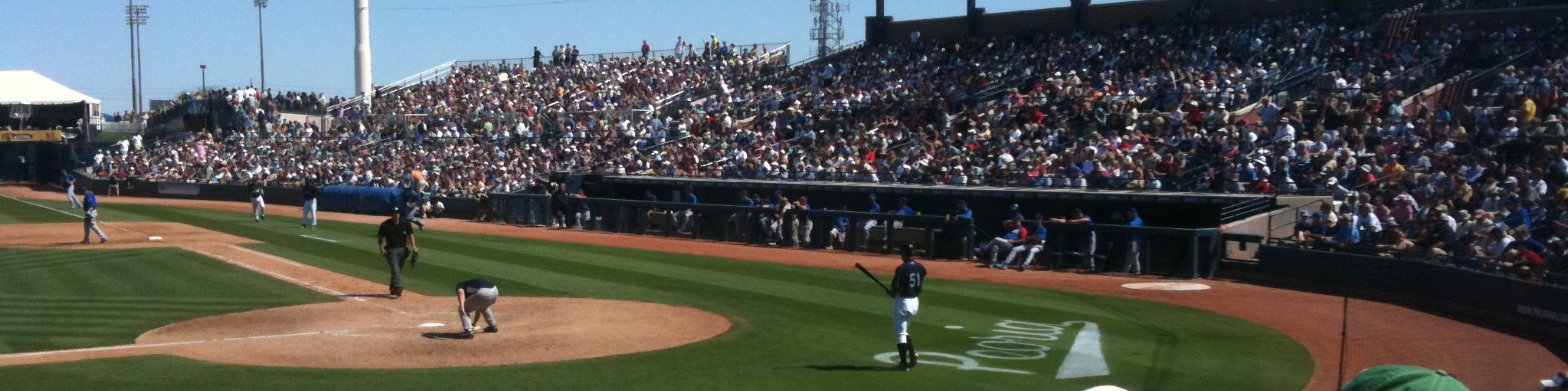}
%   \caption{Seattle Mariners at Spring Training, 2010.}
%   \Description{Enjoying the baseball game from the third-base
%   seats. Ichiro Suzuki preparing to bat.}
%   \label{fig:teaser}
% \end{teaserfigure}

%received
% \received{20 February 2007}
% \received[revised]{12 March 2009}
% \received[accepted]{5 June 2009}

%%
%% This command processes the author and affiliation and title
%% information and builds the first part of the formatted document.
\maketitle

\section{Introduction}
%% lbs takeaway&marketing background
With mobile payment services, Online Food Ordering Service (OFOS) platforms such as Ele.me,\footnote{\href{https://www.ele.me.com}{www.ele.me.com}} Meituan \footnote{\href{https://www.meituan.com}{www.meituan.com}} and Uber Eats\footnote{\href{https://www.ubereats.com}{www.ubereats.com}} have become critical components of inclusive finance through the convenience offered by real-time provisioning.
To encourage mobile payments for food and retail delivery, platforms offer incentives (e.g., coupons, commissions, bonuses) to users through diverse marketing strategies and campaigns. Within these marketing campaigns, the intelligent allocation of incentives to users and merchants represents a critical aspect for platforms in enhancing marketing efficiency.

% \begin{figure}[h]
% \centering
% \includegraphics[width=8.5cm]{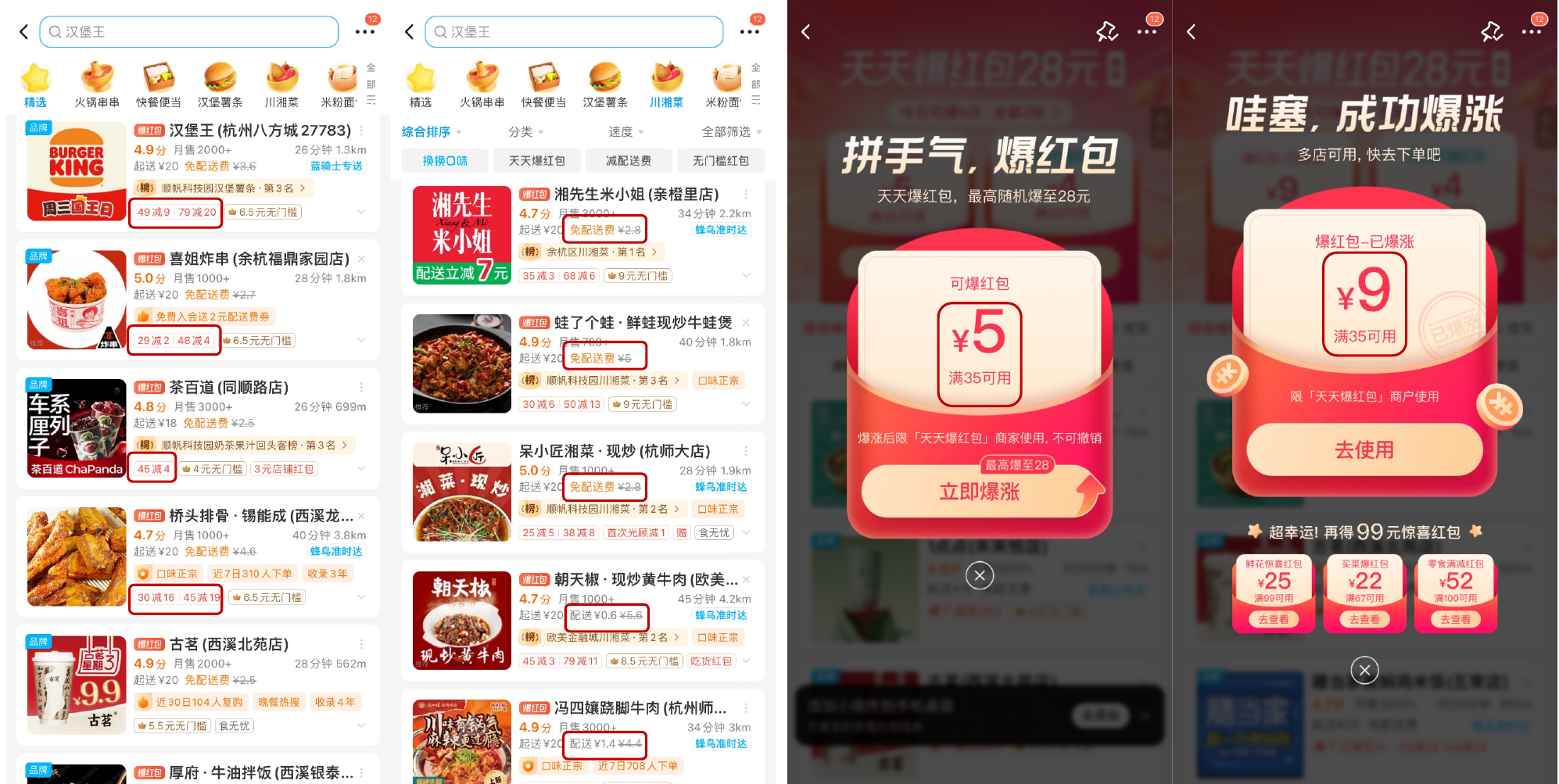}
% \caption{Monotonic Properties of Marketing Campaign. Enclosed within the red box in the figure are the monotonic characteristics of various marketing campaigns. Sequentially from left to right, the attributes represented are the discount thresholds and amounts in the tiered discount, followed by the actual and nominal delivery fees in the delivery fee waiver, and concluding with the coupon amounts and thresholds in the exploding red packets.}
% \label{fig:background}
% \end{figure}

%% monotonic response
Users exhibit different monotone response patterns to platform incentives across various marketing campaigns\cite{yu2021joint,zhao2019unified}. It is reasonable that a higher incentive amount increases the probability of user redemption. Specifically, the user Conversion Rate (CVR) expresses a monotonic increasing trend to the increment of red packets or discount amounts, while it monotonically decreases about the threshold for coupon. Similarly, it exhibits a monotonic decreasing relationship with delivery fees. Without incorporating the monotone prior into the predictive model of user response, the model is susceptible to learning biased and over-fit patterns influenced by the noise introduced in the dataset.

%% marketing paradigm & s-t background
Previous work in the marketing domain \cite{boutilier2016budget,beheshti2015survey} has explored various methods for optimizing incentives under a limited budget. \cite{ito2017optimization,zhao2019unified,yu2021joint} have approached marketing pricing by employing a paradigm consisting of two components: predicting user response scores and making real-time decisions through linear programming. 
Importantly, in the takeaway domain, where user preferences are significantly influenced by spatio-temporal factors, relevant studies in the recommendation field \cite{lin2022spatiotemporal,du2023basm,lin2023exploring} have explored the learning and enhanced utilization of spatio-temporal information.

%% motivation
%% i) precision ii) adaptability in spatio-temporal/diverse
However, previous research in the takeaway marketing domain has overlooked the critical importance of perceiving spatio-temporal information, focusing instead on analyzing users' sensitivity to incentives in a general context. In reality, users exhibit varying sensitivities to platform incentives when ordering from different merchants at different times and locations. Relying solely on a uniform sensitivity function to assess user response across varying times and locations is flawed. Therefore, it is essential to differentiate descriptions of user sensitivity by spatio-temporal information awareness. Meanwhile, ensuring the model possesses enhanced predictive accuracy and adapts robustly to diverse marketing.

% \begin{figure}[h]
% \centering
% \includegraphics[width=8.5cm]{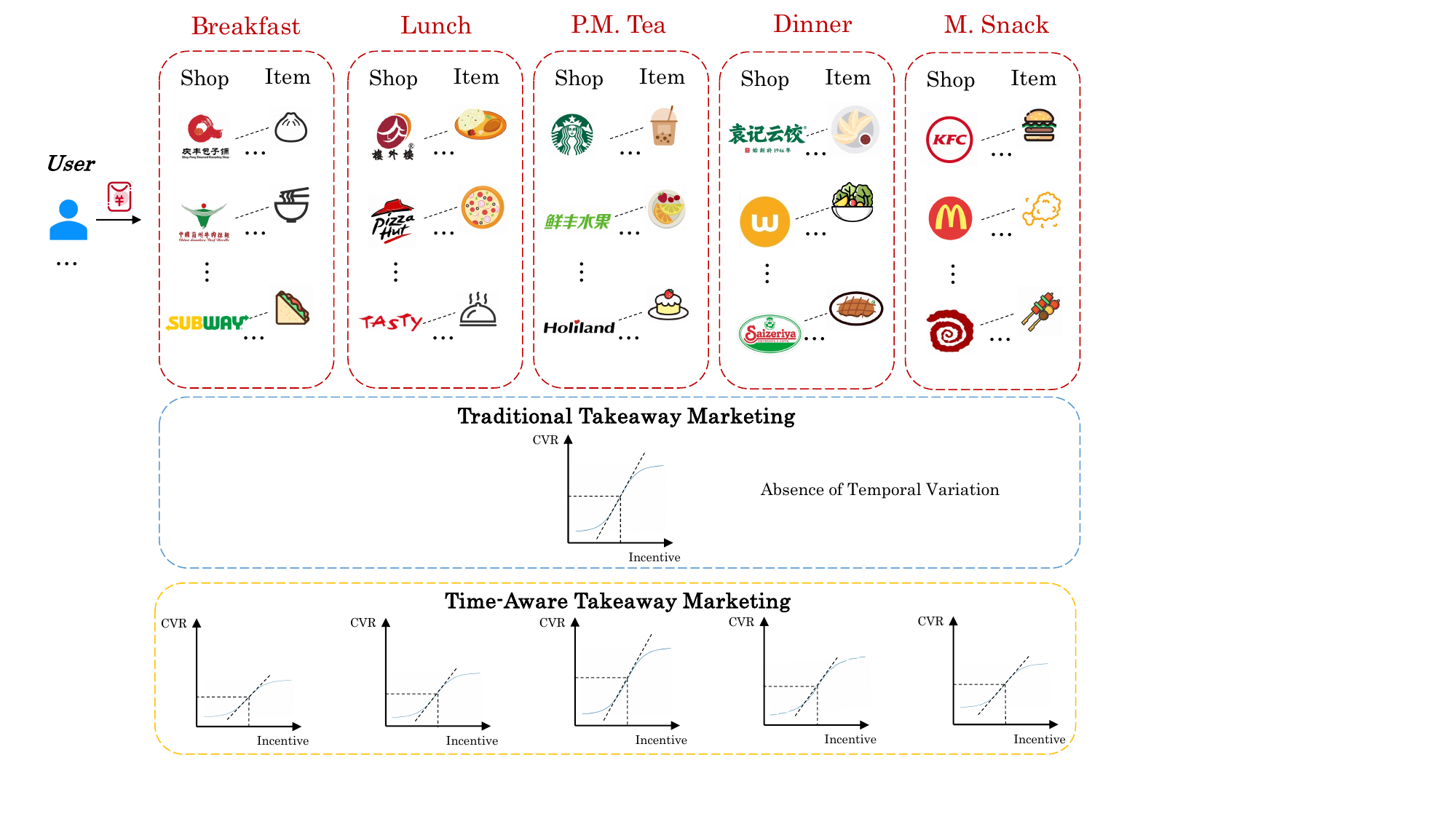}
% \caption{Illustration of differences between traditional takeaway marketing and time-aware takeaway marketing. The curves in the figure represent sensitivity curves that detail how user conversion rates change with increasing incentives.}
% \label{fig:time-aware-marketing}
% \end{figure}

In this paper, we tackle these challenges by proposing \textit{Constrained Monotonic Adaptive Network} (CoMAN), a novel marketing pricing model utilizing spatiotemporal-enhanced monotonic response. Specifically, our method employs two spatio-temporal aware activation modules to guide the model in learning and understanding spatio-temporal features, thus facilitating a more reasonable capture of spatio-temporal information within the representations. Additionally, our meticulously designed monotonic layer enhances the perception of the spatio-temporal differentiation in users’ monotonic responses to incentives. Concurrently, we introduce the activation function tailored to adaptively learn the incentive sensitivity function of users in the marketing domain. Ultimately, CoMAN achieves a strict continuous monotonic, precise, and smooth fitting of the incentive sensitivity function across different spatio-temporal dimensions. The effectiveness of our proposed methodology is demonstrated through extensive online and offline experiments across diverse marketing campaigns. In summary, our main contributions are as follows:

\begin{itemize}
\item We highlight the primary challenges encountered in the marketing domain and analyze the significance of spatio-temporal preference awareness, while further designing a novel model to address these issues. To the best of our knowledge, our model is the first to systematically model monotonic response by perceiving spatio-temporal information for diverse takeaway marketing.
\item We propose a monotonic layer that utilizes the perception of spatio-temporal information to enhance modeling monotonic responses. This module adaptively learns the concavity, convexity, and parametric expression of user incentive sensitivity functions in diverse marketing campaigns conditioned on different spatio-temporal characteristics.

\item We conceptualize a strictly monotonic \textit{Convex Linear Unit} (CLU) activation function tailored to diverse marketing. We utilize the activation function to flexibly learn the differences in monotonic function expressions. We demonstrate that this strictly convex function can more effectively approximate user incentive response functions.
\end{itemize}

% \begin{figure*}[h]
% \centering
% \includegraphics[width=14.5cm]{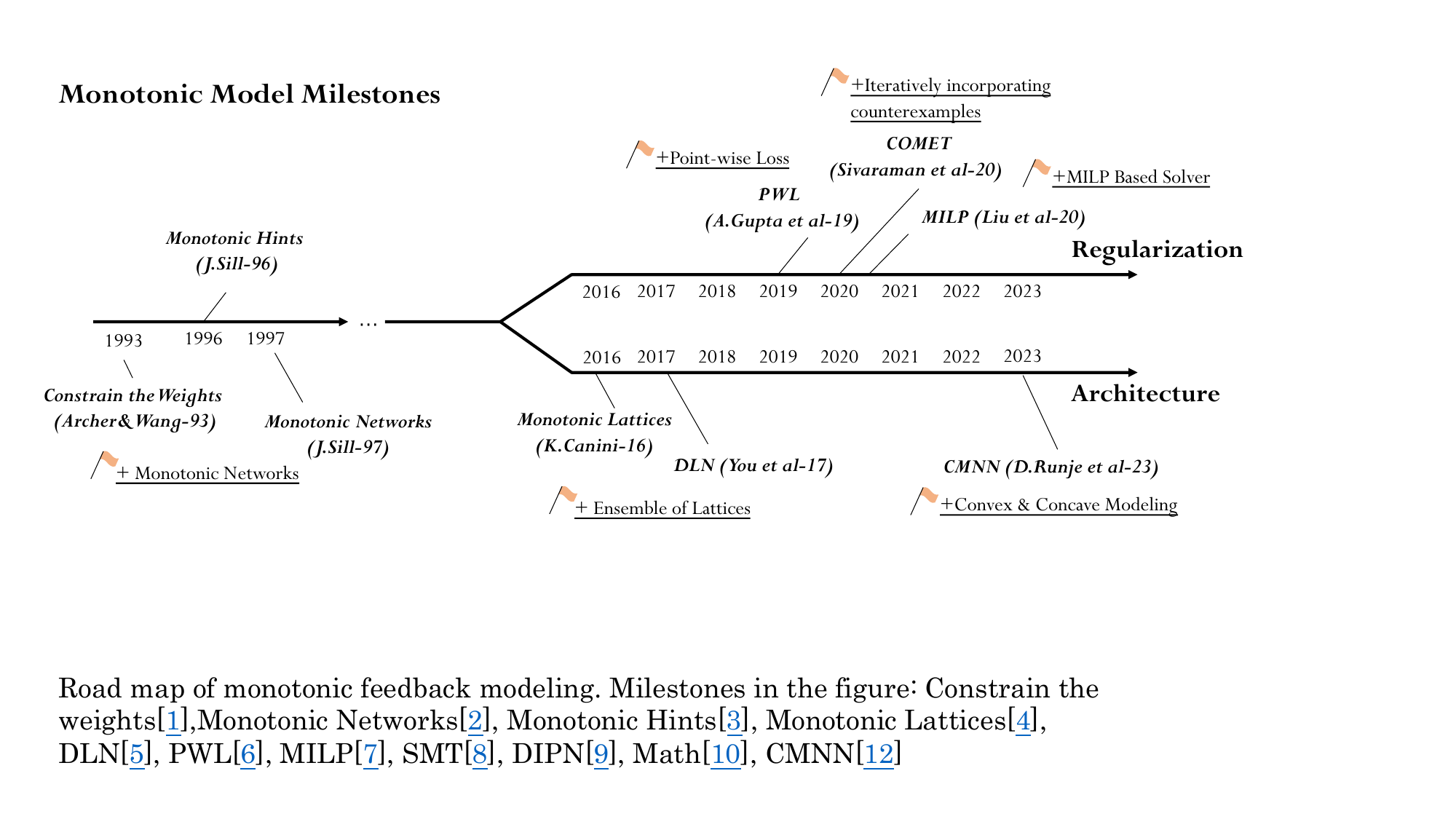}
% \caption{Monotonic Modeling. Road map of monotonic response modeling. Milestones in the figure: Constrain the weights\cite{archer1993application},Monotonic Networks\cite{sill1997monotonic}, Monotonic Hints\cite{sill1996monotonicity}, Monotonic Lattices\cite{milani2016fast}, DLN\cite{you2017deep}, PWL\cite{gupta2019incorporate}, COMET\cite{sivaraman2020counterexample}, MILP\cite{liu2020certified}, CMNN\cite{runje2023constrained}}
% \label{fig:monotonic_modeling}
% \end{figure*}

%%%%%%%%%%%%%%%%%%%%%%%%%%%%%%%%%%%%%%%%%%%%%%%%%%%%%%%%%%%%%%%%%%%%%%%%
\vspace{-1.5em}
\section{Related Work}
\label{Related Work}
\vspace{-0.2em}
\textbf{Activation Functions}.
Activation functions play a pivotal role in neural networks by introducing nonlinear transformations, allowing deep models to capture complex patterns beyond the reach of linear operations\cite{rumelhart1986learning,neal1992connectionist,hochreiter1991untersuchungen,nair2010rectified}.
% However, saturated activation functions such as the sigmoid\cite{rumelhart1986learning} and hyperbolic tangent (tanh)\cite{neal1992connectionist} are puzzled by the vanishing gradient problem\cite{hochreiter1991untersuchungen}. Rectified Linear Unit (ReLU)\cite{nair2010rectified} mitigate this issue due to its simplicity and efficiency, but it also suffers from the dying neuron problem. 
A series of studies like Leaky ReLU~\cite{maas2013rectifier}, PReLU~\cite{he2015delving}, and ELU~\cite{clevert2015fast} have designed ReLU variants that mitigate this issue by permitting a small, non-zero gradient to negative inputs, thereby mitigating the issue while enhancing the model performance. A series of works by \cite{shang2016understanding,zagoruyko2017diracnets,eidnes2017shifting} have investigated how to combine the output of the original ReLU with its point reflection within the same neural network layer.

\textbf{Monotonicity by architecture}.
Research methods for modeling monotonic response can be categorized into two primary types. The first involves the monotonic structure by construction, such as constrained network weights to be exclusively non-negative or non-positive, guaranteeing the model exhibits either monotonically increasing or decreasing. The other employs regularization to impose monotonicity, networks implement monotonic through the modified loss functions or the heuristic regularization terms during training. 
% The principal research progress is illustrated in the Figure~\ref{fig:monotonic_modeling}. 
%% architecture methods
Min-Max networks~\cite{sill1997monotonic,daniels2010monotone} utilize monotonic linear embedding and max-min-pooling to endow networks with broad modeling capacity but are challenging to train and practice.
%% DLN
Deep Lattice Network (DLN)\cite{you2017deep} applies constraints to pairs of lattice parameters but limits the function approximation space and necessitates a substantial number of parameters learned.
%% CMNN
CMNN \cite{runje2023constrained} can approximate any continuous monotonic function by combining non-saturated and saturated activation functions while lacking the capability to learn adaptively.
%% math model
% By the universal approximation property \cite{cybenko1989approximation,hornik1991approximation,pinkus1999approximation,daniels2010monotone}, we assume that the monotonic response model fits a mathematical monotonic function, which indicates an unspecified relationship between the objective response and monotonic variables. Moreover, the cornerstone of the method lies in the judicious construction and data-driven learning of the monotonic response function.

\begin{figure*}[t]
\centering
\includegraphics[width=15.5cm]{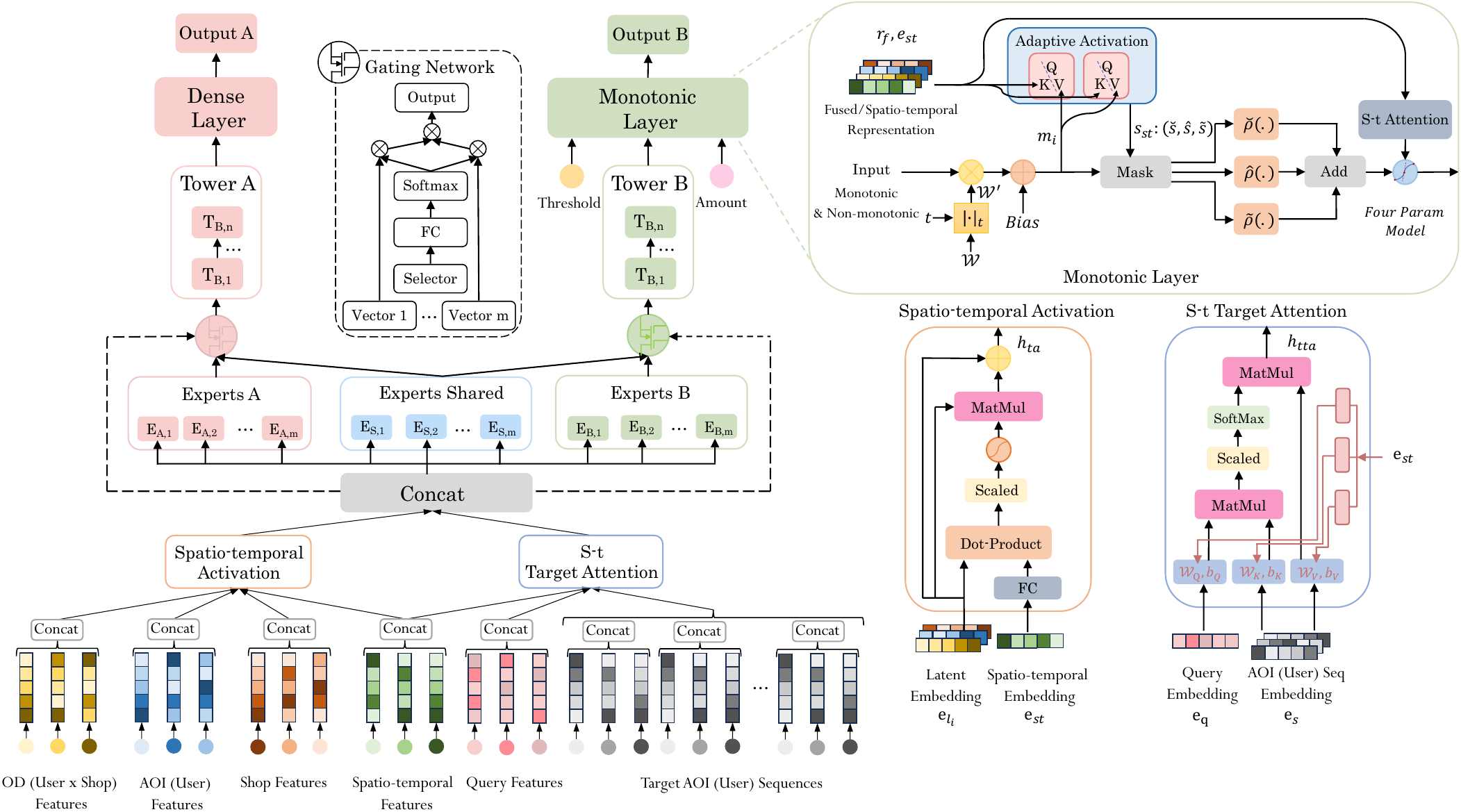}
\caption{Overview of our proposed Constrained Monotonic Adaptive Network (CoMAN) architecture.}
\vspace{-1em}
\label{fig:coman}
\end{figure*}

\textbf{Monotonicity by regularization}.
As mentioned above, enforcing monotonicity during the training can also be achieved by modifying loss function or incorporating regularization terms\cite{sill1996monotonicity,gupta2019incorporate}.
% \begin{equation}
%     E_h= \begin{cases}0 & y\left(\mathbf{x}^{\prime}\right) \geq y(\mathbf{x}) \\ \left(y(\mathbf{x})-y\left(\mathbf{x}^{\prime}\right)\right)^2 & y\left(\mathbf{x}^{\prime}\right)<y(\mathbf{x})\end{cases}.
% \end{equation}
A. Sivaraman \textit{et al.}\cite{sivaraman2020counterexample} introduced a mechanism based on Satisfiability Modulo Theories (SMT) solvers capable of identifying point pairs violating monotonicity constraints (i.e. counterexamples). Liu \textit{et al.}\cite{liu2020certified} transforms the mathematically verifying monotonicity on arbitrary piece-wise linear (i.e. ReLU) networks into a Mixed Integer Linear Programming (MILP) problem and can be efficiently solved with existing MILP solvers.
Both MILP and SMT methodologies, depend on the piecewise-linear property of ReLU to validate the constrained monotonic learning while not being extended to other activation functions such as ELU\cite{clevert2015fast}, SELU\cite{klambauer2017self}, GELU\cite{hendrycks2016gaussian}, etc. Moreover, the computational costs associated with these techniques are significant, frequently necessitating multiple iterations to achieve a rigorously verified monotonic model. 
\vspace{-0.5em}
\section{Preliminaries}
We formulate the marketing problem as a constrained linear programming problem to be solved using the optimization approach. We define the user set as \( \mathcal{C} \) and the candidate set of incentive values as \( \mathcal{T} \), where \( t_j \in \mathcal{T} \) represents the \(j\)-th treatment. Our goal is to find an incentive allocation strategy that optimally allocates incentives to maximize the sum of profits from user response (i.e. CVR) under the constrained budget \( \mathcal{B} \). Formally,
\vspace{-0.5em}
\begin{equation}
    \max_{x_{i, j}} \sum_{i=1}^{|\mathcal{C}|} \sum_{j=1}^{|\mathcal{T}|}  x_{i, j}r_{i, j} \\
\label{obj_eq}
\end{equation}
% \begin{equation}
\vspace{-1.5em}
\begin{align}
s.t.\
&x_{i, j} \in [0,1], for\ i=1,...,|\mathcal{C}|;\ j=1,...,|\mathcal{T}| \label{st1} \\
&\sum_{j=1}^{|\mathcal{T}|} x_{i, j} = 1,for\ i=1,...,|\mathcal{C}| \label{st2}\\
&\frac{\sum_{i=1}^{|\mathcal{C}|} \sum_{j=1}^{|\mathcal{T}|}  x_{i, j} t_j r_{i, j}}{\sum_{i=1}^{|\mathcal{C}|} \sum_{j=1}^{|\mathcal{T}|}  x_{i, j} r_{i, j}} \leq \mathcal{B} \label{st3}
% &\sum_{i=1}^{|\mathcal{C}|} \sum_{j=1}^{|\mathcal{T}|}  m_{i, j}^k x_{i, j} \leq h_k, for\ k =1, ..., |\mathcal{K}| \\
\end{align}
% \label{const}
% \end{equation}
% \vspace{-0.1em}
where $x_{i,j}$ denotes the decision vector indicating whether treatment $t_j$ is allocated to user $i$, and $r_{i,j}$ represents the response score by our response model. And $\sum^{|\mathcal{T}|}_{j=1}x_{i,j}t_j$ is the incentive value for the user under constraint~\ref{st2}. Furthermore, constraint~\ref{st3} indicates the limited average incentive to each user. By introducing dual variables $\mu_i$ and $\lambda$ corresponding to the aforementioned constraints \ref{st2} and \ref{st3}, we transform the primal problem into its dual counterpart. The convex optimization problem delineated in objective function \ref{obj_eq} is addressed through the application of the Lagrangian multiplier method, formally depicted as follows:
\vspace{-0.5em}
\begin{equation}
\begin{aligned}   
    \min_{x_{i, j}} &-\sum_{i=1}^{|\mathcal{C}|} \sum_{j=1}^{|\mathcal{T}|}  x_{i,j}r_{i, j}+\sum_{i=1}^{|\mathcal{C}|} (\mu_i (\sum^{|\mathcal{T}|}_{j=1}x_{i,j} - 1)) \\
    &
    +\lambda(\sum_{i=1}^{|\mathcal{C}|} \sum_{j=1}^{|\mathcal{T}|}x_{i,j}r_{i,j}(t_j -\mathcal{B}))
    \label{dual_eq}
\end{aligned}
\end{equation}
\vspace{-0.5em}
Rearrange the equation~\ref{dual_eq}, we have:
\vspace{-0.1em}
\begin{equation}
\begin{aligned}   
    \min_{x_{i, j}} &\sum_{i=1}^{|\mathcal{C}|} \sum_{j=1}^{|\mathcal{T}|}  x_{i,j}[r_{i, j}(\lambda t_j - \lambda \mathcal{B} - 1)] \\
    &+\sum_{i=1}^{|\mathcal{C}|} (\mu_i (\sum^{|\mathcal{T}|}_{j=1}x_{i,j} - 1))
\label{final_eq}
\end{aligned}
\end{equation}
\vspace{-0.1em}Given both the objective function and the constraint functions are convex, the optimal $\lambda^*$ is obtained by applying the KKT conditions and L-BFGS. From~\ref{final_eq}, it can be observed that different incentive values are only related to the first term. Consequently, an approximate solution can be derived as follows:
\vspace{-0.47em}
\begin{equation}
\begin{aligned}
    x_{i, j}= \begin{cases}1, & \text { if } j=argmin _j r_{i,j}(\lambda^* t_j - \lambda^* \mathcal{B} - 1) \\ 0, & \text { otherwise }\end{cases}
    \label{decision-eq}
\end{aligned}
\end{equation}
\vspace{-0.3em}During online pricing decisions, we employ our response model to predict response scores $|\mathcal{T}|$ times based on candidates for each request. Then utilizing the real-time computed value of $\lambda^*$ along with decision equation~\ref{decision-eq} to price the incentive value.
\vspace{-0.8em}
\section{Methods}
\label{Methods}
%% Multi-task & ple

In the marketing campaigns, we design a multi-task model as a pricing model predicting various business objectives such as Conversion Rate (CVR) or CTCVR (Click
Through \& Conversion Rate) and GMV(Gross Merchandise Volume). Given that the tasks' predictions lack substantive relevance, there is a risk of encountering the \textit{Seesaw Phenomenon} during multi-task learning. It may lead one task to improve while another deteriorates. To mitigate this, we utilize the \textit{Progressive Layered Extraction (PLE)}\cite{tang2020progressive} multi-task framework and propose the \textit{Constrained Monotonic Adaptive Network (CoMAN)}.

% Model Overall
%%%%%%%%%%%%%%%%%%%%%%%%%%%%%%%%%%%%%%%%%%%%%%%%%%%%%%%%%%%%%%%%%%%%%%%%%%%%%%%%%%%%
\textbf{Overview.} The design and modeling of our proposed framework focus on three critical aspects: activating spatio-temporal correlations within the attribute features, modeling monotonic response to incentives in diverse marketing, and enhancing the model's adaptability to spatio-temporal disparities within the monotonic layer. The overall architecture is depicted in Figure~\ref{fig:coman}.

\begin{figure*}[t]
\centering
\begin{subfigure}[b]{0.45\textwidth}
  \includegraphics[width=\linewidth]{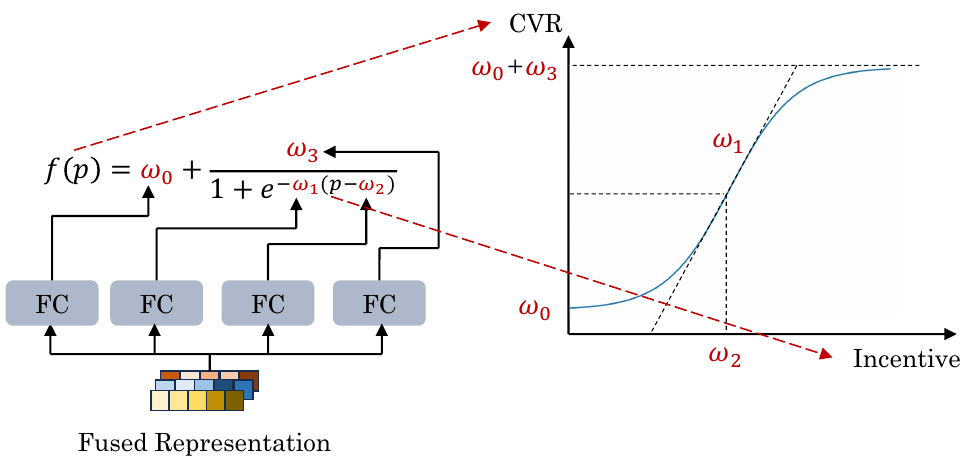}
  \caption{Four-Parameter Model}
  \label{fig:four_param_func}
\end{subfigure}\quad % \quad provides some space between the subfigures
\begin{subfigure}[b]{0.21\textwidth}
  \includegraphics[width=\linewidth]{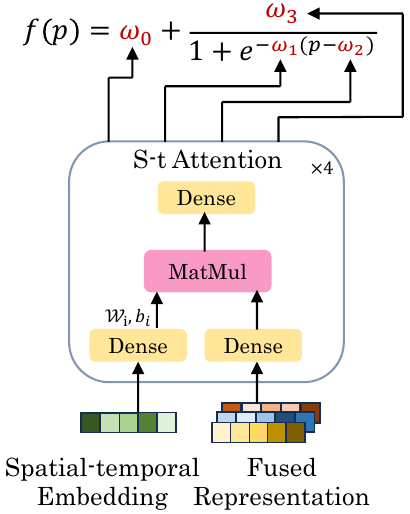}
  \caption{Spatio-temporal Attention}
  \label{fig:module_1}
\end{subfigure}
\caption{Original Four-Parameter Model (FPM) and combined with Spatio-temporal Attention (S-t Attention) fitting.}
\label{fig:main}
\end{figure*}
% Features
%%%%%%%%%%%%%%%%%%%%%%%%%%%%%%%%%%%%%%%%%%%%%%%%%%%%%%%%%%%%%%%%%%%%%%%%%%%%%%%%%%%%
In our feature extraction process, we segment the period into five intervals: breakfast [04:00–10:00), lunch [10:00–14:00), afternoon tea [14:00–17:00), dinner [17:00–20:00), and midnight snack [20:00–04:00), and we statistic numerical features to each period. We construct temporal features by incorporating information such as the Gregorian and lunar dates, holiday occurrences, and weekdays for each day.
Simultaneously, we formulate spatial features from geographical data, encompassing the user's current location, city ID, district ID, and AOI (\textit{Area of Interest}) ID. 
% Embedding Representation
We utilize embedding matrices to semantically encode user attributes, shop attributes, queries, user behaviors, and spatio-temporal attributes respectively. Specifically, the latent embeddings $e_{l_i}$ for attribute features are derived from $[e_{l_o},e_{l_u},e_{l_s}]$, which correspond to cross features embedding, user embedding, and shop embedding, respectively. The query embedding is denoted as $e_q$, the behavior embedding as $e_b$, and the spatio-temporal embedding as $e_{st}$. Notably, in certain marketing campaigns different from user-level modeling, we employ regional modeling for AOIs. Therefore, depending on marketing campaigns, we tailor our features to use either users or AOIs (e.g., cross features, historical click sequences).

%% Temporal Activation
%%%%%%%%%%%%%%%%%%%%%%%%%%%%%%%%%%%%%%%%%%%%%%%%%%%%%%%%%%%%%%%%%%%%%%%%%%%%%%%%%%%%
\textbf{Spatio-temporal Correlation.} We design two spatio-temporal correlation modules, namely \textit{Temporal Activation} and \textit{Temporal Target Attention}, to respectively explore the spatio-temporal characteristics and preferences inherent in static attributes and user behaviors. Within the \textit{Temporal Activation} module, spatio-temporal features input to compute the weights of each feature embedding (i.e., User, Shop, Query) \( e_{l_i} \) through a dot-product operation followed by a sigmoid function. This process aims to capture the dynamic variations inherent in spatio-temporal representations. The outcome is then multiplied by the input representations and integrated via a residual connection, designed to activate the spatio-temporal information encapsulated within the feature embeddings. The output of the Temporal Activation module, denoted as \( h_{ta} \), can be formulated as follows:
\vspace{-0.5em}
\begin{equation}
    h_{ta} = {\rm sigmoid}(\frac{{\rm FC}(e_{st})\cdot e_{l_i}^T}{\sqrt{d_{e_{l_i}}}})e_{l_i} + e_{l_i}.
\end{equation}
%% Temporal Target Attention + Seq
%%%%%%%%%%%%%%%%%%%%%%%%%%%%%%%%%%%%%%%%%%%%%%%%%%%%%%%%%%%%%%%%%%%%%%%%%%%%%%%%%%%%
\vspace{-0.1em}Employing the \textit{Temporal Target Attention} mechanism to catch the interest preferences of different AOIs/Users towards Shops while simultaneously learning the variation across different times and spaces effectively. We accomplish this by applying linear projections to the \textit{Query}, \textit{Key}, and \textit{Value} vectors using weights and biases (e.g., $[w_Q,b_Q]$) generated by fully connected neural networks from $e_{st}$.
The module output denoted as \( h_{tta} \), is articulated as follows:
% \begin{equation}
%     \begin{aligned}
%     Q &= w_Q \cdot e_q + b_Q\\
%     K &= w_K \cdot e_s + b_K\\
%     V &= w_V \cdot e_s + b_V,
%     \end{aligned}
% \end{equation}
\begin{equation}
    h_{tta} = {\rm softmax}(\frac{(w_Q \cdot e_q + b_Q)(w_K \cdot e_s + b_K)^T}{\sqrt{d_k}})(w_V \cdot e_s + b_V).
\end{equation}

%% CMNN
%%%%%%%%%%%%%%%%%%%%%%%%%%%%%%%%%%%%%%%%%%%%%%%%%%%%%%%%%%%%%%%%%%%%%%%%%%%%%%%%%%%%
\textbf{Monotonic Response.} To circumvent the difficulties in training when combining traditional linear monotonicity constraints with saturated (bounded) activation functions
and the limitations that only convex models can be learned by combining convex activation functions, we construct a monotonic layer based on \textit{Constrained Monotonic Neural Networks (CMNN)}\cite{runje2023constrained} framework. This approach not only facilitates utilizing non-saturated activation functions but also allows for a priori modeling of varying degrees of convexity and concavity across diverse application scenarios.

We employ a monotonicity indicator vector $t \in \{-1, 0, 1\}^n$ for the element-wise weights transformation of a feature vector to constrain the monotonicity. The operation $|\cdot|^t$ is applied as follows:
\begin{equation}
    w_{j, i}^{\prime}= \begin{cases}\left|w_{j, i}\right| & \text { if } t_i=1 \\ -\left|w_{j, i}\right| & \text { if } t_i=-1 \\ w_{j, i} & \text { otherwise }\end{cases}
    \label{eq10}
\end{equation}

We rigorously define a convex activation function $\breve{\rho} \in \mathcal{\breve{A}}$, where $\mathcal{\breve{A}}$ denotes the set of all convex, monotonically increasing, lower-bounded, zero-centred functions. 
% \vspace{-1em}
\begin{figure}[b]
\centering
\includegraphics[width=4.5cm]{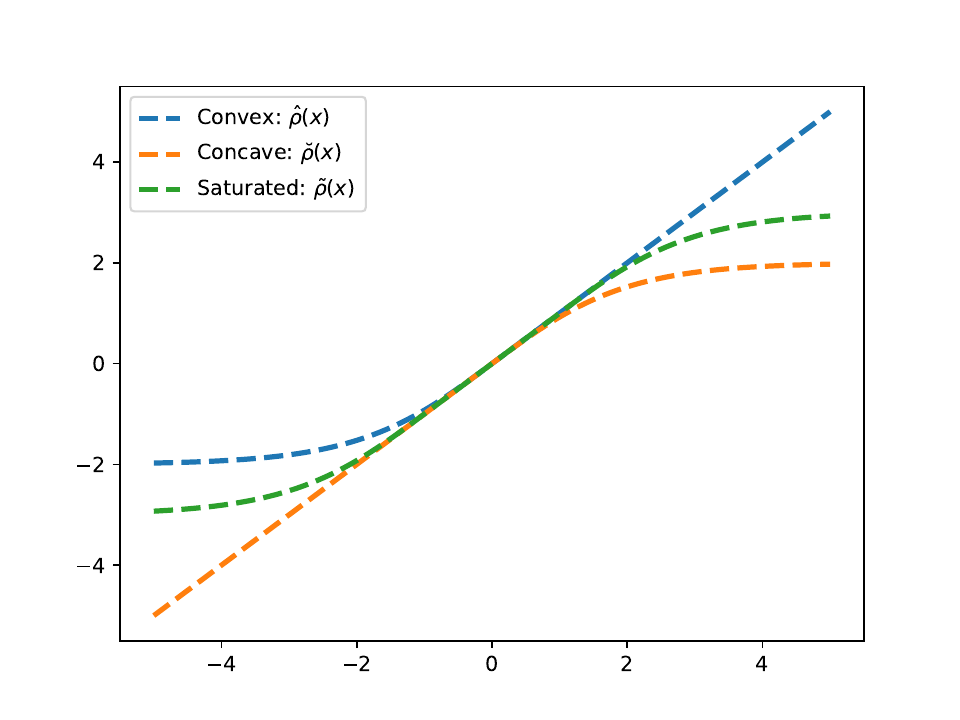}
\caption{CLU based activation functions construction}
\label{fig:clu}
\end{figure}
And we define the concave activation function $\hat{\rho}(x)$ and the saturated function $\tilde{\rho}(x)$ as follows:
\begin{equation}
    \begin{aligned}
& \hat{\rho}(x)=-\breve{\rho}(-x) \\
& \tilde{\rho}(x)= \begin{cases}\breve{\rho}(x+1)-\breve{\rho}(1) & \text { if } x<0 \\
\hat{\rho}(x-1)+\breve{\rho}(1) & \text { otherwise }\end{cases}
\label{m-eq11}
\end{aligned}
\end{equation}

The function $\hat{\rho}(x)$ is concave and upper-bounded, while the function $\tilde{\rho}(x)$ is saturated and bounded. Furthermore, the activation selection vector $\mathbf{s} = (\breve{s}, \hat{s}, \tilde{s}) \in \mathbb{N}^3$ and $\breve{s} + \hat{s} + \tilde{s} = m$ is introduced to approximate the concavity and convexity of the monotonic response through combining concave, convex, and saturated activation functions, where $m$ denotes the monotonic variables dimension.

Specifically, by setting $\mathbf{s}=(m, 0, 0)$, we guarantee that the monotonic response is modeled as a convex function, with analogous principles applied for concave modeling. Then the output of the \textit{combined activation functions} $\rho^\mathbf{s}$ operates element-wise as follows:

\begin{equation}
    \rho^{\mathbf{s}}(\mathbf{h})_j= \begin{cases}\breve{\rho}\left(h_j\right) & \text { if } j \leq \breve{s} \\ \hat{\rho}\left(h_j\right) & \text { if } \breve{s}<j \leq \breve{s}+\hat{s} \\ \tilde{\rho}\left(h_j\right) & \text { otherwise }\end{cases}
\end{equation}

%% CLU
%%%%%%%%%%%%%%%%%%%%%%%%%%%%%%%%%%%%%%%%%%%%%%%%%%%%%%%%%%%%%%%%%%%%%%%%%%%%%%%%%%%%
Moreover, since the user response to incentives is not smooth across the entire treatment interval, it is impractical to model these curves using a linear representation with a fixed smoothing exponential function, as noted by \cite{zhao2019unified}.
Accordingly, to flexibly adapt to the spatiotemporally diverse property of marketing campaigns and learn the monotonic response functions, we introduce a strictly monotonic convex activation function, named \textit{Convex Linear Unit (CLU)}. The detailed proof of the convexity properties of CLU is provided in Appendix \ref{B-2}. Furthermore, we derive the corresponding concave activation function $\hat{\rho}(x)$ and the saturated activation function $\tilde{\rho}(x)$ through CLU. The activation functions construction is depicted in Figure~\ref{fig:clu}, with CLU mathematical expressions as follows:

\begin{equation}
\begin{aligned}
& \breve{\rho}(x)= \begin{cases}-\frac{\omega_0}{2}+\frac{\omega_0}{1+e^{-\omega_1 x}} & \text { if } x<0 \\
x & \text { otherwise }\end{cases} \\
\end{aligned}
\end{equation}

%% FPM
%%%%%%%%%%%%%%%%%%%%%%%%%%%%%%%%%%%%%%%%%%%%%%%%%%%%%%%%%%%%%%%%%%%%%%%%
The \textit{Universal Approximation Theorem}\cite{cybenko1989approximation,hornik1991approximation,pinkus1999approximation} posits that any continuous function over a closed interval can be approximated by neural networks with nonpolynomial activation functions. Concurrently, any multivariate continuous monotonic function on compact subsets of $\mathbb{R}^k$ can be approximated by monotonic constrained neural networks with at most \(k\) hidden layers using sigmoid activation\cite{daniels2010monotone}. Inspired by this, we propose the \textit{Four-Parameter Model (FPM)} through an adaptive translation of the sigmoid, combined with the CMNN framework, to flexibly approximate monotonic response functions in complex scenarios. Meanwhile, we derive Theorem \ref{main-theorem1}, with a detailed proof available in the Appendix \ref{B-1}.

%% Theorem 1
\begin{theorem}
    Let $\breve{\rho} \in \mathcal{\breve{A}}$. Then any multivariate continuous monotone function f on a compact subset of $\mathbb{R}^k$ can be approximated with a monotone constrained neural network of at most $k$ layers using $\rho$ as the activation function.
    \label{main-theorem1}
\end{theorem}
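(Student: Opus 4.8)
The plan is to reduce Theorem~\ref{main-theorem1} to the classical approximation result of Daniels and Velikova~\cite{daniels2010monotone} quoted above, which gives exactly this statement for the logistic sigmoid (indeed for any bounded, monotone ``squashing'' activation). Concretely, I would take the monotone constrained network with at most $k$ layers that~\cite{daniels2010monotone} builds from sigmoid units and sign-restricted weights, and replace every sigmoid by an affinely rescaled copy of the saturated activation $\tilde{\rho}$ obtained from $\breve{\rho}$ through~\eqref{m-eq11} (i.e.\ the choice $\mathbf{s}=(0,0,m)$ in $\rho^{\mathbf{s}}$). The weight-sign transformation~\eqref{eq10} is left untouched, so monotonicity is preserved, while the residual error introduced by the substitution is pushed below any prescribed threshold. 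This mirrors the strategy used for the CMNN framework~\cite{runje2023constrained}; the only thing that needs checking is that an arbitrary $\breve{\rho}\in\mathcal{\breve{A}}$ produces a $\tilde{\rho}$ admissible for that argument.

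First I would verify that $\tilde{\rho}$ is a \emph{squashing function}: continuous, monotonically increasing, and with distinct finite limits at $\pm\infty$. Continuity and $\tilde{\rho}(0)=0$ hold because the two branches of~\eqref{m-eq11} agree at the origin, using $\hat{\rho}(-1)=-\breve{\rho}(1)$; monotonicity is inherited branchwise from $\breve{\rho}$ and $\hat{\rho}$, with the branches matching monotonically at $0$. For the asymptotics: since $\breve{\rho}$ is increasing and bounded below, $\breve{\rho}(x+1)\to\inf\breve{\rho}$ as $x\to-\infty$, so $\tilde{\rho}(x)\to a:=\inf\breve{\rho}-\breve{\rho}(1)$; and for $x\ge 0$ we have $\hat{\rho}(x-1)=-\breve{\rho}(1-x)\to-\inf\breve{\rho}$, so $\tilde{\rho}(x)\to b:=\breve{\rho}(1)-\inf\breve{\rho}$, with $b-a=2\bigl(\breve{\rho}(1)-\inf\breve{\rho}\bigr)>0$ because $\breve{\rho}$ is increasing and non-constant. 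Hence $x\mapsto(\tilde{\rho}(x)-a)/(b-a)$ is a bona fide squashing function, and by~\cite{cybenko1989approximation,hornik1991approximation} (rescaling the pre-activation by a large factor) it approximates the Heaviside step, and therefore the logistic sigmoid, uniformly on any compact interval.

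With this in hand I would run the substitution argument. Fix a continuous monotone $f$ on a compact $K\subset\mathbb{R}^k$ and $\varepsilon>0$; by~\cite{daniels2010monotone} there is a monotone constrained network $N_\sigma$ with at most $k$ layers, sigmoid activations, and $\sup_K|f-N_\sigma|<\varepsilon/2$. Because the weights of $N_\sigma$ are fixed and $K$ is compact, all pre-activations in every layer range over a fixed compact set; on that set, by the previous paragraph, each sigmoid can be approximated to within $\delta$ by an affine rescaling of $\tilde{\rho}$. Replacing the activations layer by layer and telescoping through the $k$ layers (using the fixed, hence Lipschitz, linear maps between them) bounds the total perturbation of the output by $C\delta$ on $K$, so choosing $\delta$ small enough yields a network $N$ with activations of the admissible form satisfying $\sup_K|f-N|<\varepsilon$. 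Positive affine rescalings of an increasing activation do not change which weights must be non-negative or non-positive, so the sign pattern $t$ from~\eqref{eq10} carries over verbatim and $N$ is still a monotone constrained network of at most $k$ layers, now with $\rho$ (here $\tilde{\rho}$) as activation.

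I expect the main obstacle to be making the layerwise error propagation rigorous: one must assemble (i) a uniform, layer-by-layer bound on the pre-activation ranges over $K$ and (ii) Lipschitz control of the remaining sub-network, so that a single small parameter $\delta$ controls the output deviation across all $k$ simultaneously-substituted layers. A secondary point to handle with care is that the sigmoid-by-$\tilde{\rho}$ approximation has to be valid on a compact interval large enough to contain those pre-activation ranges; this is precisely where the hypothesis $\breve{\rho}\in\mathcal{\breve{A}}$ enters, through the finite and strictly ordered asymptotes $a<b$ established above, and it is the only place the convex/lower-bounded structure is used. Everything else is routine.
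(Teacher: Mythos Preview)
Your proposal is correct and takes essentially the same route as the paper: reduce to Daniels--Velikova~\cite{daniels2010monotone} by showing that the saturated activation $\tilde{\rho}$ built from $\breve{\rho}$ via~\eqref{m-eq11} is a squashing function whose affine rescaling approximates the Heaviside step (this is the paper's Lemma~\ref{lemma1}, and your asymptote computation $a=\inf\breve{\rho}-\breve{\rho}(1)$, $b=-a$ is exactly theirs).

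The one execution difference worth noting is how the affine rescaling is disposed of. You plan to carry activations of the form $\alpha\tilde{\rho}+\beta$ through a layerwise Lipschitz/telescoping bound, tracking compact pre-activation ranges and tuning a single $\delta$; the paper sidesteps this entirely with an algebraic lemma (Lemma~\ref{lemma2} in the appendix): any constrained monotone network using $\tilde{\rho}_{\alpha,\beta}(x)=\alpha\tilde{\rho}(x)+\beta$ is \emph{exactly} equal, input-by-input, to one using $\tilde{\rho}$ itself, because the positive scale $\alpha$ and the shift $\beta$ can be absorbed into the adjacent linear layer's weights and biases without disturbing the sign pattern in~\eqref{eq10}. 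That buys a cleaner finish---no $\delta$, no compactness bookkeeping, no error propagation---at the price of one extra but easy lemma. Your approach is more self-contained and still correct; the ``main obstacle'' you flagged is real but routine, and the paper's absorption trick is precisely the shortcut that removes it.
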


Figure~\ref{fig:four_param_func} elucidates the functional form and parametric behavior of the FPM method,  we learn and interpret the incentive sensitivity curve through four parameters, each bearing tangible significance in marketing. The parameter \( \omega_0 \) represents the natural conversion rate in the absence of any incentive. The parameter \( \omega_1 \) denotes the sensitivity of the user at the motivational inflection point \( \omega_2 \), while \( \omega_3 \) corresponds to the maximal conversion rate attainable by users.

\begin{table*}
  \centering
  \caption{Overall performance on Exploding Red Packets and Delivery Fee Waiver.}
  \vspace{-0.3em}
  \label{offline-performance}
  \resizebox{\textwidth}{!}{
  \begin{tabular}{lcccccccccccccccccccccccccccccc}
  \toprule
    \multirow{2}{*}{\textbf{Method}} & \multicolumn{4}{c}{\textbf{Exploding Red Packets}}& \multicolumn{4}{c}{\textbf{Delivery Fee Waiver}}\\ \cline{2-10} \rule{0pt}{2ex}
                        & \textbf{AUC}\ \raisebox{0.3ex}{$\uparrow$}     & \textbf{MSE}\ \raisebox{0.3ex}{$\downarrow$}   & \textbf{KL Div}\ \raisebox{0.3ex}{$\downarrow$}   & \textbf{Corr Coeff}\ \raisebox{0.3ex}{$\uparrow$}   &
                        \textbf{MAE}\ \raisebox{0.3ex}{$\downarrow$}     & \textbf{MSE}\ \raisebox{0.3ex}{$\downarrow$}   & \textbf{KL Div}\ \raisebox{0.3ex}{$\downarrow$}   & \textbf{Corr Coeff}\ \raisebox{0.3ex}{$\uparrow$}  &\\
    \midrule
    DNN\cite{schmidhuber2015deep}  & 0.6974 & 0.0636 & 0.6243 & 0.8513 & 0.1841 & 0.0831 & 0.8332 & 0.8304 &\\
    DNN-M\cite{archer1993application}  & 0.7022 & 0.0431 & 0.4259 & 0.8713 & 0.1232 & 0.0422 & 0.5347 & 0.8629 &\\
    SBBM\cite{zhao2019unified}  & 0.7081 & 0.0343 & 0.2110 & 0.8927 & 0.1079 & 0.0338 & 0.3219 & 0.8827 &\\
    FPM  & 0.7113 & 0.0293 & 0.1901 & 0.9066 & 0.0913 & 0.0323 & 0.3083 & 0.8961 & \\
    CMNN\cite{runje2023constrained}(ReLU\cite{nair2010rectified})  & 0.7111 & 0.0282 & 0.1848 & 0.9150 & 0.0894 & 0.0298 & 0.2772 & 0.8916 & \\
    CMNN(ELU\cite{clevert2015fast})  & 0.7139 & 0.0273 & 0.1737 & 0.9171 & 0.0823 & 0.0281 & 0.2761 & 0.9127 &\\
    CMNN(CLU)  & 0.7151 & 0.0269 & 0.1718 & 0.9189 & 0.0809 & 0.0274 & 0.2343 & 0.9143 & \\
    CoMAN-B  & 0.7348 & 0.0224 & 0.1305 & 0.9229 & 0.0769 & 0.0263 & 0.2218 & 0.9193 & \\
    CoMAN w/o AA   & 0.7414 & 0.0191 & 0.0984 & 0.9333 & 0.0624 & 0.0227 & 0.1897 & 0.9326 & \\
    CoMAN w/o S-t  & 0.7407 & 0.0159 & 0.0925 & 0.9357 & 0.0733 & 0.0243 & 0.1809 & 0.9273 & \\
    % \midrule
    \multirow{2}{*}{\textbf{CoMAN}} & \textbf{0.7480} & \textbf{0.0115} & \textbf{0.0901} & \textbf{0.9486} & \textbf{0.0454} & \textbf{0.0184} & \textbf{0.1211} & \textbf{0.9469} &\\
                        & \textcolor{deepred}{\textbf{+0.0506}}   & \textcolor{deepred}{\textbf{+0.0521}} & \textcolor{deepred}{\textbf{+0.5342}}   & \textcolor{deepred}{\textbf{+0.0973}}& \textcolor{deepred}{\textbf{+0.1387}} & \textcolor{deepred}{\textbf{+0.0647}} &\textcolor{deepred}{\textbf{+0.7121}}& \textcolor{deepred}{\textbf{+0.1165}}   & \\
    \bottomrule
  \end{tabular}}
\end{table*}
% \begin{table}[!htbp]
%   \caption{Exploding Red Packets}
%   \label{tab:cross-domain}
%   \centering
%   \begin{tabular}{lcccccc}
%   \toprule
%     \textbf{Method} & \textbf{MSE} & \textbf{MAPE} & \textbf{KL Div} & \textbf{Corr Coeff}\\
%     \midrule
%     Four Param Model & 0.149 & 1.150 & 2.551 & 0.690 &\\
%     \textbf{CoMAN Mono(ours)} & \textbf{0.019} & \textbf{0.347} & \textbf{0.520} & \textbf{0.829} &\\
%     \bottomrule
%   \end{tabular}
% \end{table}
\begin{table*}
  \centering
  \caption{Temporal segment evaluation on Exploding Red Packets. We denote the following periods: Bkfst (Breakfast), Lch (Lunch), A.Tea (Afternoon Tea), Din (Dinner), and M.S. (Midnight Snack).}
  \vspace{-0.3em}
  \label{online-erp}
  \resizebox{\textwidth}{!}{
  \begin{tabular}{lcccccccccccccccccccccccccccccccc}
  \toprule
    \multirow{2}{*}{\textbf{Method}} & \multicolumn{6}{c}{\textbf{MAE\ \raisebox{0.3ex}{$\downarrow$}}}& \multicolumn{6}{c}{\textbf{KL Div\ \raisebox{0.3ex}{$\downarrow$}}}\\ \cline{2-14} \rule{0pt}{2ex}
                        & \textbf{Bkfst}     & \textbf{Lch}   & \textbf{A.Tea}   & \textbf{Din}  & \textbf{M.S.} & \textbf{Overall}&
                        \textbf{Bkfst}     & \textbf{Lch}   & \textbf{A.Tea}   & \textbf{Din}  & \textbf{M.S.}& \textbf{Overall}&\\
    \midrule
    FPM  & 0.1782 & 0.1385 & 0.1461 & 0.1297 & 0.1118 & 0.1325 & 0.1395 & 0.0824 & 0.1024 & 0.0918 & 0.0439 & 0.2801& \\
    % CoMAN w/o (S-t + AA)  & 0.0392 & 0.0715 & 0.0620 & 0.0814 & 0.0660 & 0.0495 & 0.0493 & 0.0729 & 0.0739 & 0.0882 & 0.0412 & 0.2236 & \\
    % CoMAN w/o AA  & 0.0955 & 0.0553 & 0.0672 & 0.0568 & 0.0374 & 0.0574 & 0.0313 & 0.0428 & 0.0901 & 0.0457 & 0.0142 & 0.2181 & \\
    % CoMAN w/o S-t  & 0.0917 & 0.0813 & 0.0553 & 0.0861 & 0.0590 & 0.0464 & 0.0377 & 0.1177 & 0.0269 & 0.1197 & 0.0380 & 0.1569 & \\
    % \midrule
    \multirow{2}{*}{\textbf{CoMAN}} & \textbf{0.0369} & \textbf{0.0449} & \textbf{0.0529} & \textbf{0.0780} & \textbf{0.0629} & \textbf{0.0448} & \textbf{0.0297} & \textbf{0.0288} & \textbf{0.0432} & \textbf{0.0737} & \textbf{0.0360} & \textbf{0.1480} &\\
                        & \textcolor{deepred}{\textbf{+0.1413}}   & \textcolor{deepred}{\textbf{+0.0936}} & \textcolor{deepred}{\textbf{+0.0932}}   & \textcolor{deepred}{\textbf{+0.0517}}& \textcolor{deepred}{\textbf{+0.0489}} & \textcolor{deepred}{\textbf{+0.0877}} &\textcolor{deepred}{\textbf{+0.1098}}& \textcolor{deepred}{\textbf{+0.0536}}   & \textcolor{deepred}{\textbf{+0.0592}}& \textcolor{deepred}{\textbf{+0.0181}}  & \textcolor{deepred}{\textbf{+0.0079}} & \textcolor{deepred}{\textbf{+0.1321}} &  \\
    \bottomrule
  \end{tabular}}
\end{table*}
\begin{table*}
  \centering
  \caption{Temporal segment evaluation on Delivery Fee Waiver}
  \vspace{-0.3em}
  \label{online-dfw}
  \resizebox{\textwidth}{!}{
  \begin{tabular}{lcccccccccccccccccccccccccccccccc}
  \toprule
    \multirow{2}{*}{\textbf{Method}} & \multicolumn{6}{c}{\textbf{MAE\ \raisebox{0.3ex}{$\downarrow$}}}& \multicolumn{6}{c}{\textbf{KL Div\ \raisebox{0.3ex}{$\downarrow$}}}\\ \cline{2-14} \rule{0pt}{2ex}
                        & \textbf{Bkfst}     & \textbf{Lch}   & \textbf{A.Tea}   & \textbf{Din}  & \textbf{M.S.} & \textbf{Overall}&
                        \textbf{Bkfst}     & \textbf{Lch}   & \textbf{A.Tea}   & \textbf{Din}  & \textbf{M.S.}& \textbf{Overall}&\\
    \midrule
    FPM  & 0.1073 & 0.0883 & 0.1021 & 0.0932 & 0.0972 & 0.0943 & 0.0721 & 0.0809 & 0.0678 & 0.0733 & 0.0749 & 0.3692 & \\
    % \midrule
    \multirow{2}{*}{\textbf{CoMAN}} & \textbf{0.0771} & \textbf{0.0569} & \textbf{0.0732} & \textbf{0.0631} & \textbf{0.0654} & \textbf{0.0632} & \textbf{0.0307} & \textbf{0.0276} & \textbf{0.0283} & \textbf{0.0259} & \textbf{0.0271} & \textbf{0.1398} &\\
                        & \textcolor{deepred}{\textbf{+0.0302}}   & \textcolor{deepred}{\textbf{+0.0314}} & \textcolor{deepred}{\textbf{+0.0289}}   & \textcolor{deepred}{\textbf{+0.0301}}& \textcolor{deepred}{\textbf{+0.0318}} & \textcolor{deepred}{\textbf{+0.0311}} &\textcolor{deepred}{\textbf{+0.0414}}& \textcolor{deepred}{\textbf{+0.0533}}   & \textcolor{deepred}{\textbf{+0.0395}}& \textcolor{deepred}{\textbf{+0.0474}}  & \textcolor{deepred}{\textbf{+0.0478}} & \textcolor{deepred}{\textbf{+0.2294}} &  \\
    \bottomrule
  \end{tabular}}
\end{table*}

% \begin{table}[!htbp]
%   \caption{Inflection points}
%   \label{tab:inflection-points}
%   \centering
%   \begin{tabular}{lccccccc}
%   \toprule
%     \textbf{Model} & \textbf{Bkfst}     & \textbf{Lch}   & \textbf{A.Tea}   & \textbf{Din}  & \textbf{M.S.} & \textbf{Overall}&\\
%     \midrule
%     Baseline(FPM) & +0.82 &	+0.102 &	+0.454 \\
%     CoMAN w/o (S-t + AA) & +0.82\% &	+0.102\% &	+0.454\% \\
%     CoMAN w/o AA &	+1.198\% &	+0.506\% &	+1.564\% \\
%     CoMAN w/o S-t &	+2.177\% &	+3.015\% &	+3.565\% \\
%     CoMAN & & &			 \\
%     \bottomrule
%   \end{tabular}
% \end{table}

% \begin{table*}[!htbp]
%   \caption{avg Delivery Fee Waiver}
%   \label{tab:cross-domain}
%   \centering
%   \begin{tabular}{lcccccc}
%   \toprule
%     \textbf{Method}& \textbf{MAE\ \raisebox{0.3ex}{$\downarrow$}} & \textbf{MSE\ \raisebox{0.3ex}{$\downarrow$}} & \textbf{MAPE\ \raisebox{0.3ex}{$\downarrow$}} & \textbf{KL Div\ \raisebox{0.3ex}{$\downarrow$}}& \textbf{Corr Coeff\ \raisebox{0.3ex}{$\uparrow$}}\\
%     \midrule
%     Four Param Model & 0.094 & 0.032 & 0.545 & 0.369 & 0.896 &\\
%     \multirow{2}{*}{\textbf{CoMAN(ours)}} & \textbf{0.063} & \textbf{0.017} & \textbf{0.238} & \textbf{0.140} & \textbf{0.946} & \\
%                         & \textcolor{deepred}{\textbf{+0.031}}   & \textcolor{deepred}{\textbf{+0.015}}& \textcolor{deepred}{\textbf{+0.307}}   & \textcolor{deepred}{\textbf{+0.229}}& \textcolor{deepred}{\textbf{+0.050}}  & \\
%     \bottomrule
%   \end{tabular}
% \end{table*}

%% Adaptive Activation
%%%%%%%%%%%%%%%%%%%%%%%%%%%%%%%%%%%%%%%%%%%%%%%%%%%%%%%%%%%%%%%%%%%%%%%%%%%%%%%%%%%%
\textbf{Adaptive Response Enhancement.} To enable the model to learn the variability of monotonic response across different spatio-temporal dimensions, we design the  \textit{Adaptive Activation} and \textit{Spatio-temporal Attention (S-t Attention)} modules within the monotonic layer to enhance the model's capability to perceive spatio-temporal information. These modules are designed to flexibly adapt the convexity, concavity, and parametric expressions of monotonic response functions through spatio-temporal representations.

In previous CMNN methodology, $\mathbf{s}$ is a priori parameter requiring manual adjustment through tuning rather than allowing for adaptive learning.
The inflection point of a monotonic response function divides it into convex and concave segments. Drawing on these insights, we propose an Adaptive Activation module capable of adaptively learning and adjusting the proportion of convexity and concavity by catching and fusing deep representations with spatio-temporal information. As depicted on the left of the monotonic layer in Figure~\ref{fig:coman}, we mask different parts of the monotonic representation \(m_i\) with the adaptively learned convexity-concavity ratio factor \(s_{st}\) and then combine through convex, concave, and saturated activation functions separately. The operations of the Adaptive Activation can be elucidated as follows:
\begin{equation}
    s_{st} = |{\rm softmax}(\frac{m_i{({\rm Concat}(r_f, e_{st})})^T}{\sqrt{d_k}})m_i|.
\end{equation}
\begin{figure}[t]
\centering
\includegraphics[width=5.5cm]{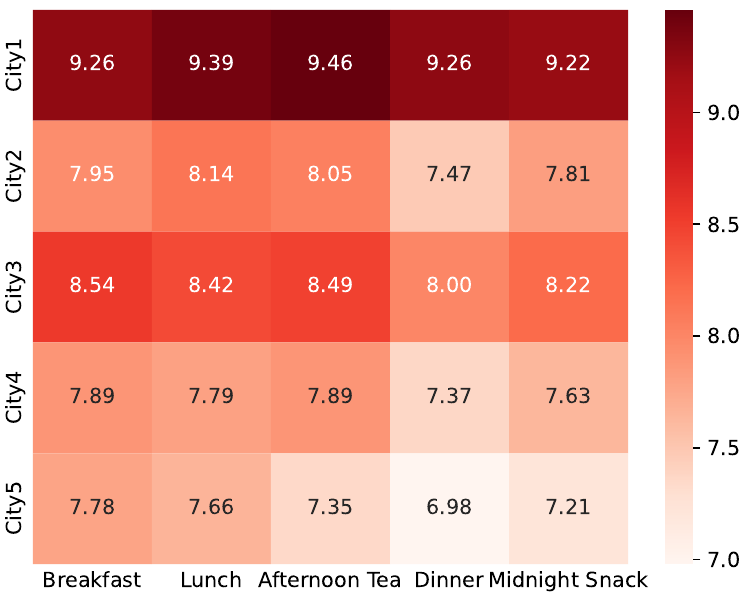}
\caption{Visualization of the average redemption amount of coupons in the top five cities with the most users across different periods.}
\vspace{-2em}
\label{fig:amount_distribution}
\end{figure}

%% S-t Attention
%%%%%%%%%%%%%%%%%%%%%%%%%%%%%%%%%%%%%%%%%%%%%%%%%%%%%%%%%%%%%%%%%%%%%%%%%%%%%%%%%%%%
We design an \textit{Spatio-temporal (S-t) Attention} module to utilize spatio-temporal information to further adaptively adjust and learn FPM. Firstly, we generate weights and bias ($[w_i, b_i] $) terms for deep fused representation vectors (e.g., users, shops, etc.) through a fully connected layer. Then we conduct matrix multiplication calculations to derive the FPM parameters. The module structure is illustrated in Figure~\ref{fig:module_1}, and can be formulated as follows:
\vspace{-0.3em}
\begin{equation}
    \omega_i = {\rm FC}(w_i \cdot {\rm FC}(r_f) + b_i).
\end{equation}

%%%%%%%%%%%%%%%%%%%%%%%%%%%%%%%%%%%%%%%%%%%%%%%%%%%%%%%%%%%%%%%%%%%%%%%%
% \section{Evaluation}
% \label{Evaluation}

% Simulation (algorithm evaluation)
%%%%%%%%%%%%%%%%%%%%%%%%%%%%%%%%%%%%%%%%%%%%%%%%%%%%%%%%%%%%%%%%%%%%%%%%

\section{Experiments}
\label{Experiments}

\subsection{Experimental Setup}
\label{Experimental Setup}

\begin{figure*}[!htbp]
\centering
\begin{subfigure}[b]{0.25\textwidth}
  \includegraphics[width=\linewidth]{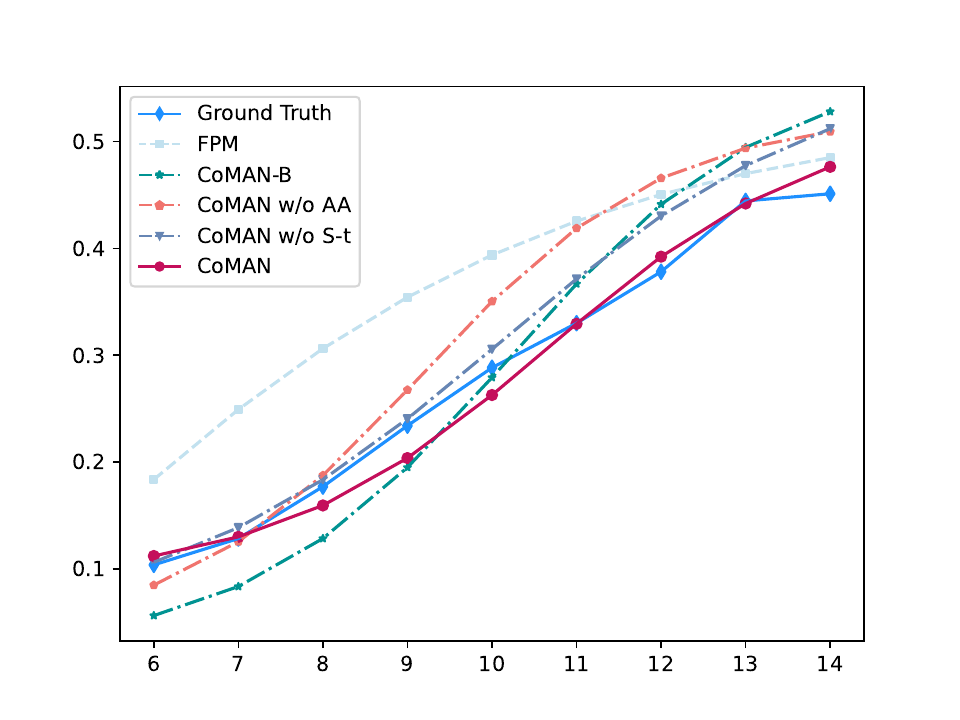}
  \caption{Overall}
  \label{fig:baseline}
\end{subfigure} % \quad provides some space between the subfigures
\begin{subfigure}[b]{0.25\textwidth}
  \includegraphics[width=\linewidth]{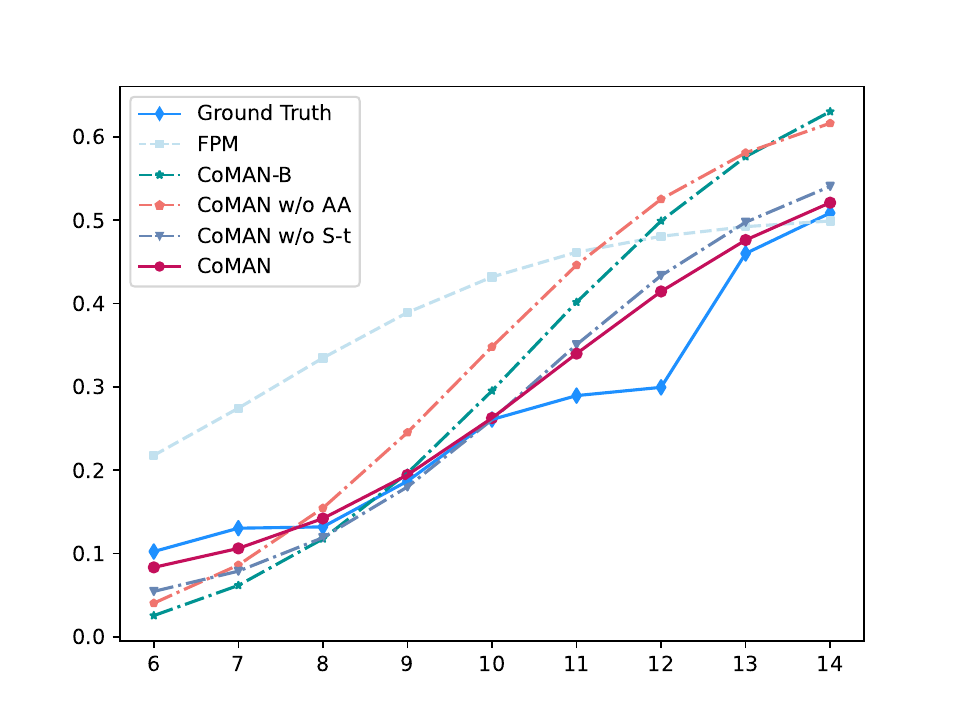}
  \caption{Breakfast}
  \label{fig:cmtn_base}
\end{subfigure}
\begin{subfigure}[b]{0.25\textwidth}
  \includegraphics[width=\linewidth]{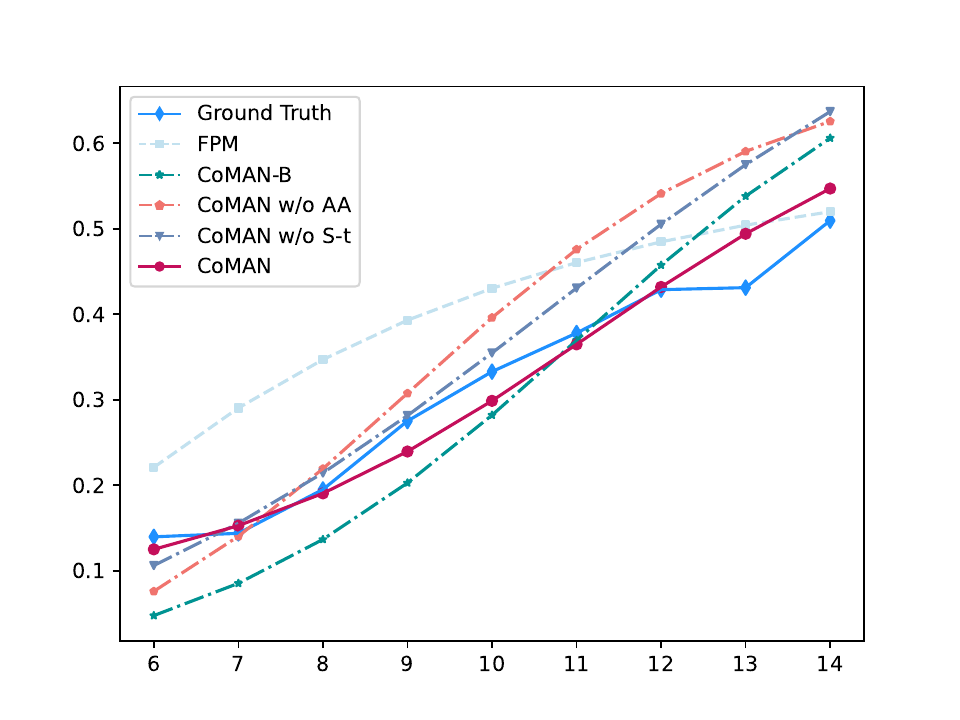}
  \caption{Lunch}
  \label{fig:cmtn_base}
\end{subfigure}

\begin{subfigure}[b]{0.25\textwidth}
  \includegraphics[width=\linewidth]{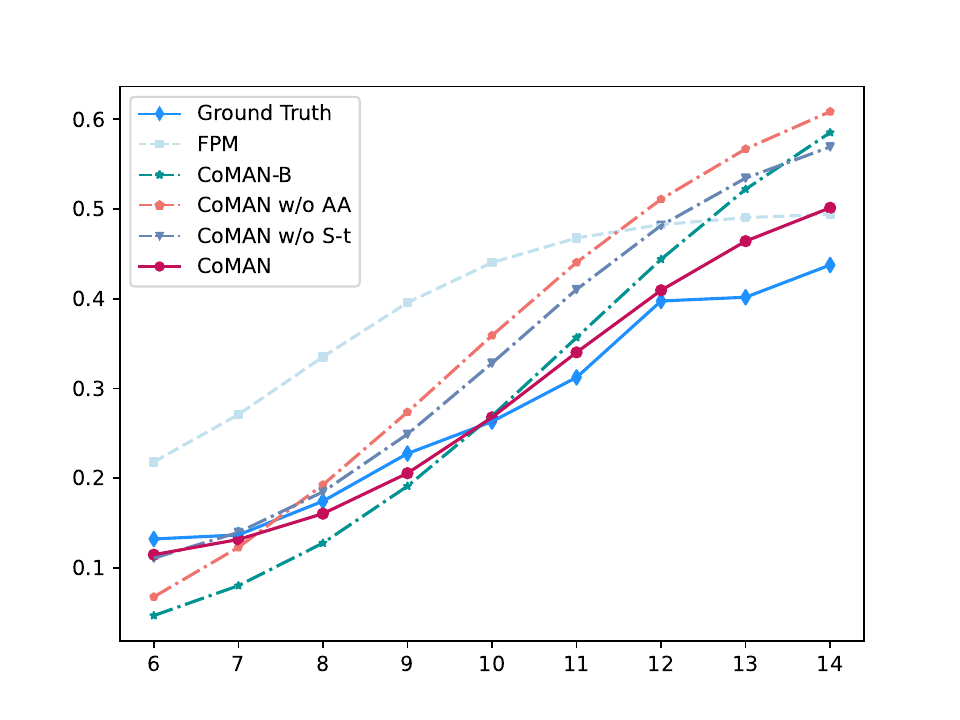}
  \caption{Afternoon Tea}
  \label{fig:st_att}
\end{subfigure}
\begin{subfigure}[b]{0.25\textwidth}
  \includegraphics[width=\linewidth]{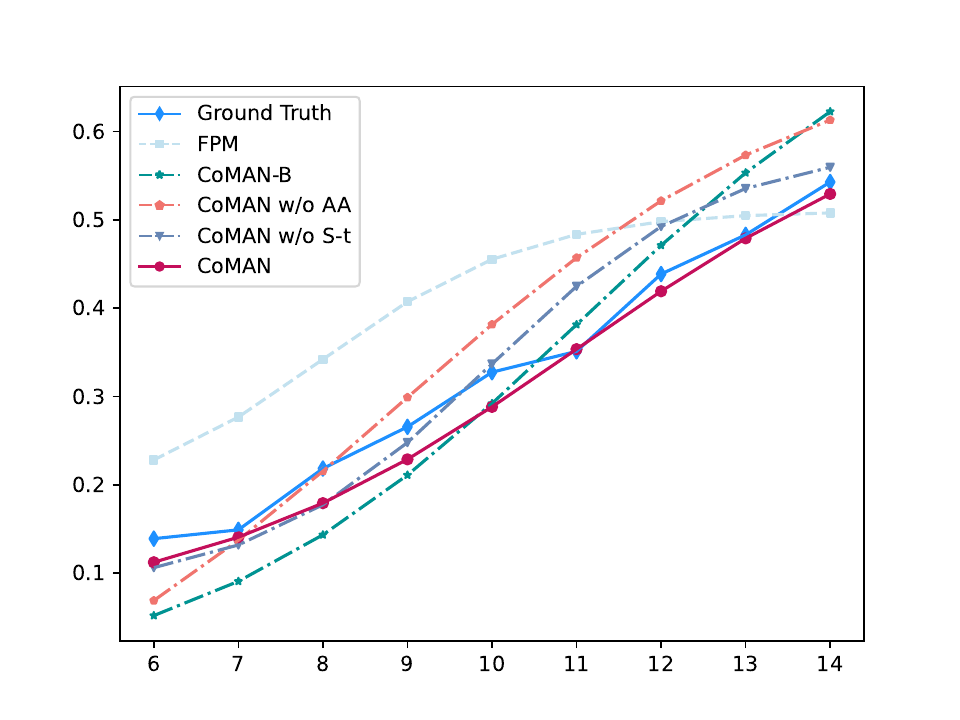}
  \caption{Dinner}
  \label{fig:adapt_act}
\end{subfigure}
\begin{subfigure}[b]{0.25\textwidth}
  \includegraphics[width=\linewidth]{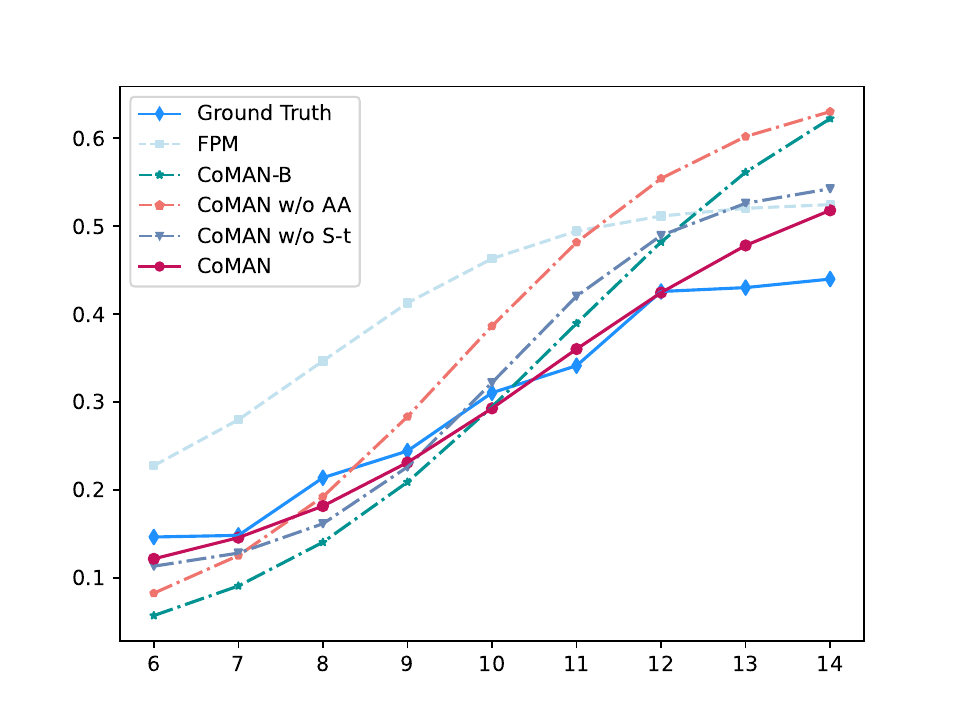}
  \caption{Midnight Snack}
  \label{fig:cmtn_fv}
\end{subfigure}

\begin{subfigure}[b]{0.25\textwidth}
  \includegraphics[width=\linewidth]{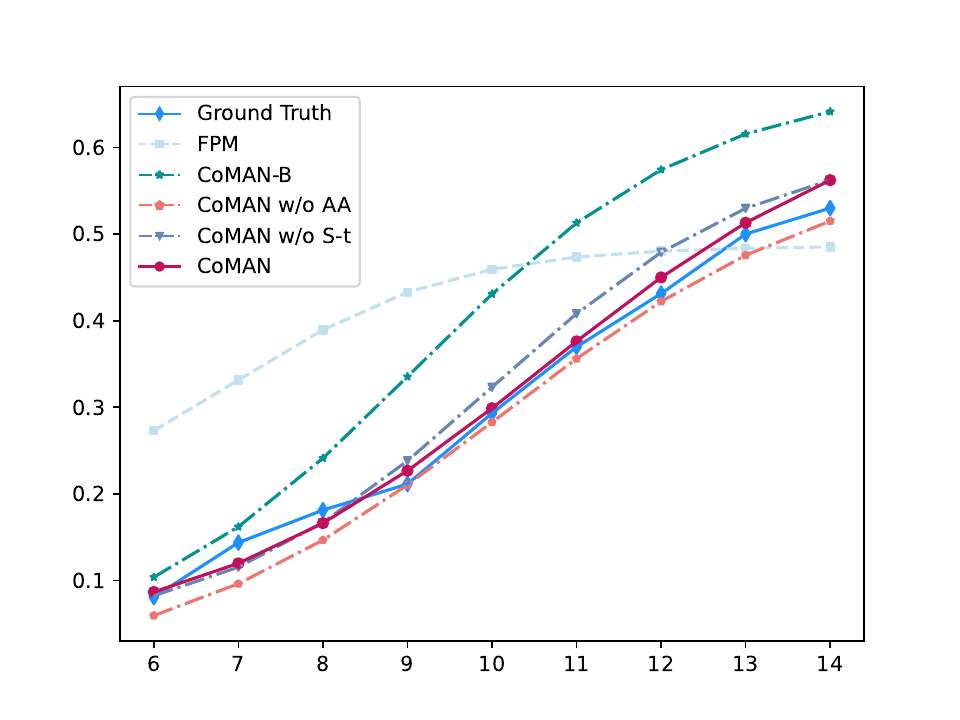}
  \caption{City 1}
  \label{fig:city1}
\end{subfigure}
\begin{subfigure}[b]{0.25\textwidth}
  \includegraphics[width=\linewidth]{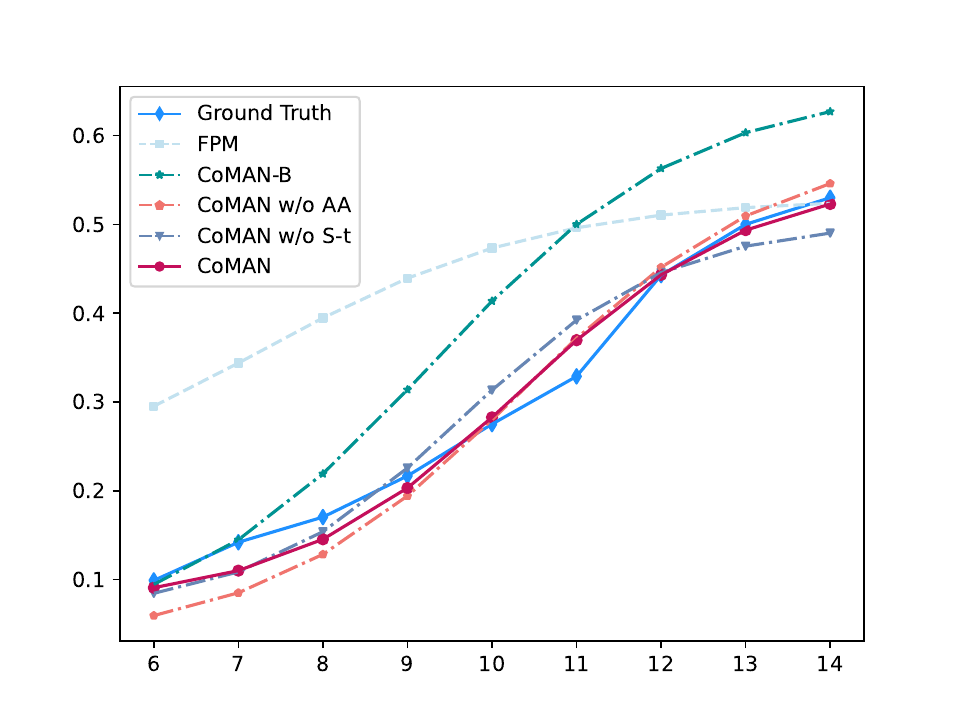}
  \caption{City 2}
  \label{fig:city2}
\end{subfigure}
\begin{subfigure}[b]{0.25\textwidth}
  \includegraphics[width=\linewidth]{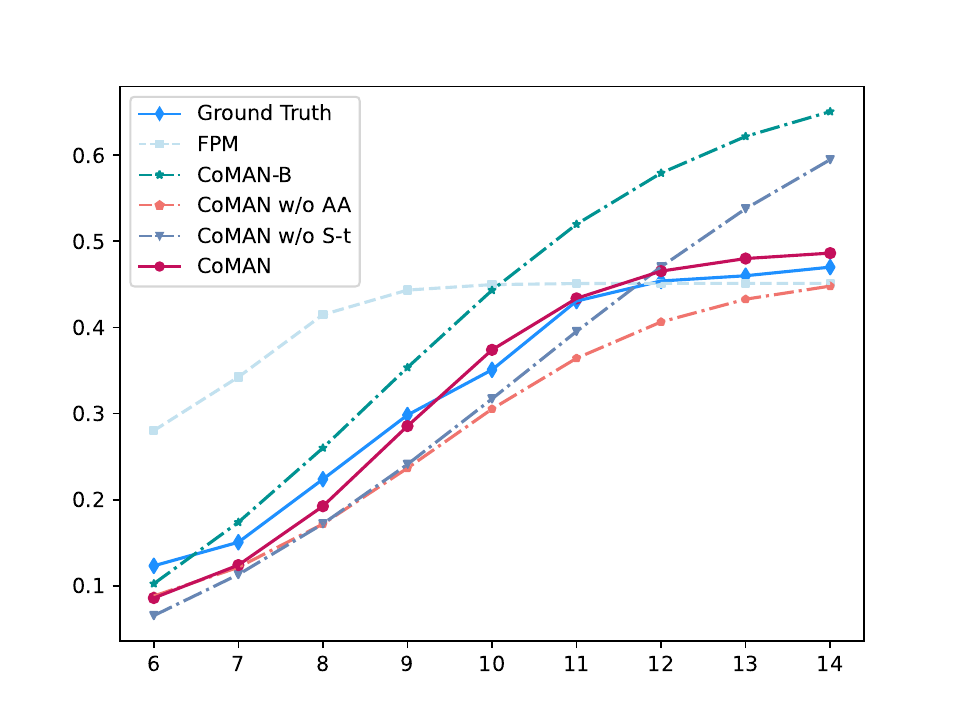}
  \caption{City 3}
  \label{fig:city3}
\end{subfigure}
\vspace{-0.5em}
\caption{Visualization of the prediction scores for incentive responses from various online models and the corresponding ground truth. Results are observed overall, with concurrent evaluations segmented by periods and cities.}
\vspace{-1em}
\label{fig:overall-distribution}
\end{figure*}
% \begin{figure*}[h]
% \centering
% \includegraphics[width=18.5cm]{exps/time_exps_total.pdf}
% \caption{time exps total}
% \label{fig:time_exps_total}
% \end{figure*}

\textbf{Dataset Description}. We conducted experiments on real-world datasets\footnote{We gathered two real-world datasets from Ele.me's online platform due to the absence of available public datasets suitable for evaluating our task.} from two marketing campaigns on Ele.me, one of the largest OFOS platforms in China. The training datasets span one week, while the test dataset covers a single day. Notably, we randomly select 5\% of online traffic flow as unbiased sample to randomly allocate incentive values as the ground truth for the incentive sensitivity function approximation. 
(i) Exploding Red Packets Dataset: Send an average of 10 million exploding red packets to users online per day. It's a classification dataset with approximately 1.05 million daily training samples and nearly 400 features, ultimately reaching 8.4 million samples. 
(ii) Delivery Fee Waiver Dataset: It is a regression dataset covering approximately 3.2 million AOIs, with about 4.3 million shops and 303 features. The daily samples are around 770 million, with the total exceeding 6 billion samples. We segment application data by period and city and visualize users' average redemption amount in exploding red packets, as depicted in Figure \ref{fig:amount_distribution}. The visualization reveals that users' preferences for coupon amounts differ across various spatio-temporal settings, which is consistent with the analysis.

\textbf{Evaluation Metric}. To comprehensively evaluate the performance of our model, we assess it from both accuracy and distribution perspectives. We first employ metrics such as the Area Under the Curve (AUC)\cite{fawcett2006introduction}, Mean Absolute Error (MAE)\cite{willmott2005advantages} and Mean Squared Error (MSE)\cite{hyndman2006another} to evaluate the precision of the model. Subsequently, we use the Kullback-Leibler Divergence (KL Div)\cite{kullback1951information} and Correlation Coefficients (Corr Coeff)\cite{pearson1895vii} to assess the similarity between the model's predicted incentive sensitivity functions and the random traffic (i.e. ground truth).

\textbf{Baselines}. To verify the effectiveness of our proposed methodology, we compare it against baselines with several \textit{state-of-the-art} methods as follows: (i) \textit{\textbf{DNN}}: This is a naive model devoid of spatio-temporal and monotonic modeling capabilities. (ii) \textit{\textbf{DNN-M}}: This method constrains the network weights to be exclusively non-negative or non-positive and combines sigmoid activation functions for monotonic modeling. 
(iii) \textit{\textbf{SBBM}}: The Semi-black-box model\cite{zhao2019unified} predicts dynamic marketing responses by enhancing the logit demand curve through neural networks.
(iv) \textit{\textbf{FPM}}: This approach utilizes the mathematical four-parameter function to independently approximate the monotonic response function. (v) \textit{\textbf{CMNN}}: This method employs the CMNN framework in conjunction with different base convex activation functions, including CLU proposed in our study, while combining sigmoid as the final activation function.

\textbf{Hyperparamters}. In our experiments, for Exploding Red Packets, the model training batch size is set to 128 with Adagrad as an optimizer and a learning rate of 0.001. In Delivery Fee Waiver, the batch size is configured to 1024, using an Adagrad optimizer with a learning rate of 0.012. Furthermore, all experiments are conducted on AOP\footnote{A model training platform self-developed by the Alibaba Group} using 40 parameter servers and 400 worker threads. Further implementation details concerning the model parameters can be found in Appendix \ref{appendix-c}.

\begin{figure}[b]
\centering
\vspace{-1.5em}
\begin{subfigure}[b]{0.23\textwidth}
  \includegraphics[width=\linewidth]{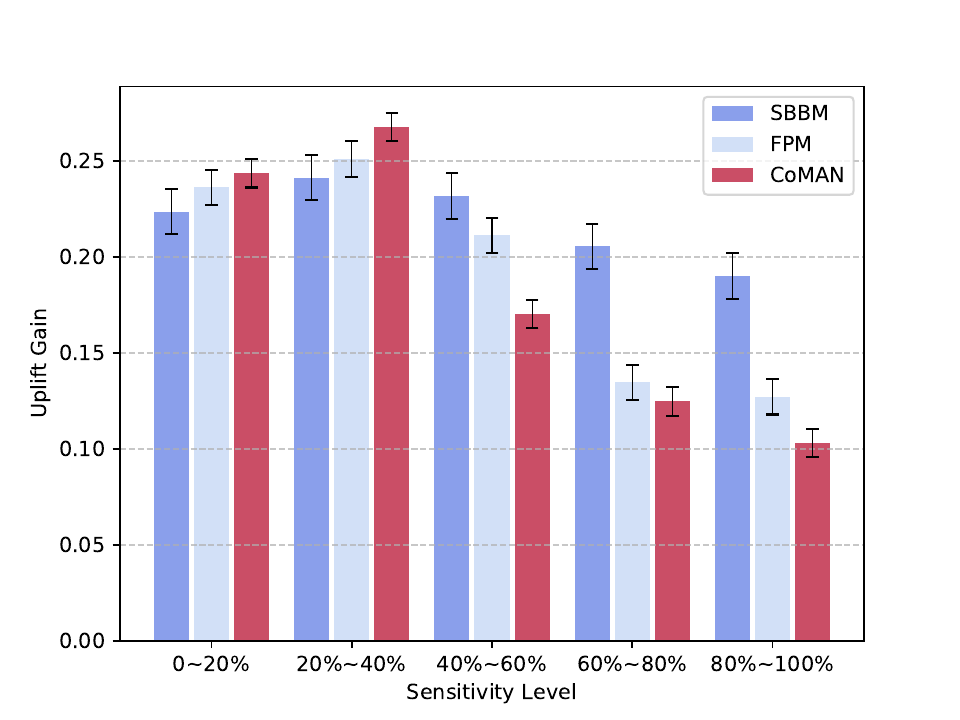}
  \caption{Exploding Red Packets}
  \label{fig:erp_uplift}
\end{subfigure} % \quad provides some space between the subfigures
\begin{subfigure}[b]{0.23\textwidth}
  \includegraphics[width=\linewidth]{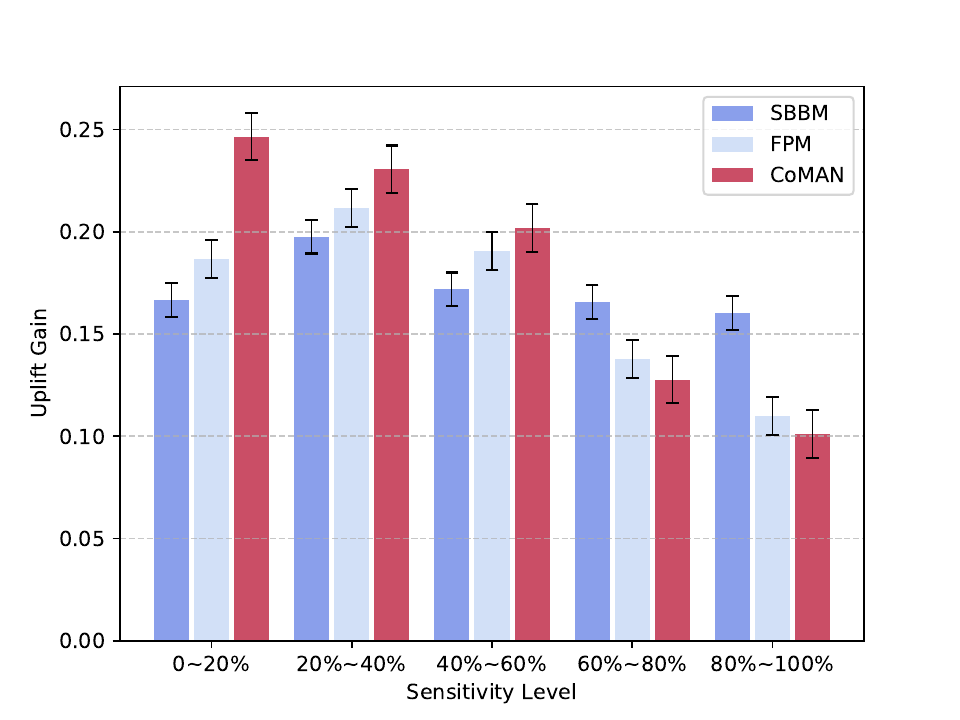}
  \caption{Delivery Fee Waiver}
  \label{fig:dfw_uplift}
\end{subfigure}
\vspace{-0.8em}
\caption{Comparisons of uplift gain for users vary with different incentive sensitivity levels.}
\label{fig:uplift}
\end{figure}

\vspace{-1em}
%%%%%%%%%%%%%%%%%%%%%%%%%%%%%%%%%%%%%%%%%%%%%%%%%%%%%%%%%%%%%%%%%%%%%%%%
\subsection{Experimental Results}
\label{Experimental Results}
%% overall performance
\textbf{Performance Comparison} We report the overall performance of various competitive baselines and our proposed model across two datasets in Table \ref{offline-performance}. Moreover, to evaluate the real-world effectiveness of our approach in practical applications, we present the performance of the model in online marketing campaigns as shown in Table~\ref{online-erp} and Table~\ref{online-dfw}. The results yield the following observations: (i) The performance of CoMAN is notably superior to all competitive baselines. Compared to previous marketing response approaches, CoMAN achieves an average improvement of 5.33\% in user redemption response accuracy and an average enhancement of 5.89\% in distribution similarity. These remarkable performance gains strongly indicate that CoMAN has the potential to enhance marketing efficiency in budget allocation significantly. (ii) Methods like FPM and SBBM, which employ nonlinear mathematical functions in conjunction with neural networks to adaptively enhance the logit response curve, are more effective than simply applying positive or negative constraint learning to neural network weights. These methods are capable of more accurately characterizing users' sensitivity to incentives. Specifically, FPM is more effective than SBBM in representing this sensitivity, thereby highlighting the effectiveness of our approach in integrating FPM for learning monotonic response. (iii) Compared to traditional marketing approaches like DNN-M and SBBM, CoMAN performs better in perceiving and incorporating spatio-temporal information. Furthermore, our model surpasses previous monotonic methods such as CMNN and FPM in effectively capturing spatio-temporal representations which enhances monotonic modeling. These findings underscore the significant advancements of CoMAN, primarily attributed to the activation modeling of spatio-temporal information and the robust capture of spatio-temporal monotonic patterns. (iv) Our designed CLU function, when integrated with CMNN, outperforms the other two activation functions, ReLU and ELU. This outcome substantiates the validity of approximating user monotonic response curves with a non-fixed exponential function, thus better capturing and expressing user sensitivity to incentives. (v) Our approach exhibits superior performance across datasets with different modeling granularities (i.e., user-level and regional-level). This indicates that our method is capable of more precisely describing monotonic response under various incentives, whether the focus is on user-level perception or regional-level modeling.
\begin{figure}[b]
\centering
\vspace{-1.5em}
\begin{subfigure}[b]{0.23\textwidth}
  \includegraphics[width=\linewidth]{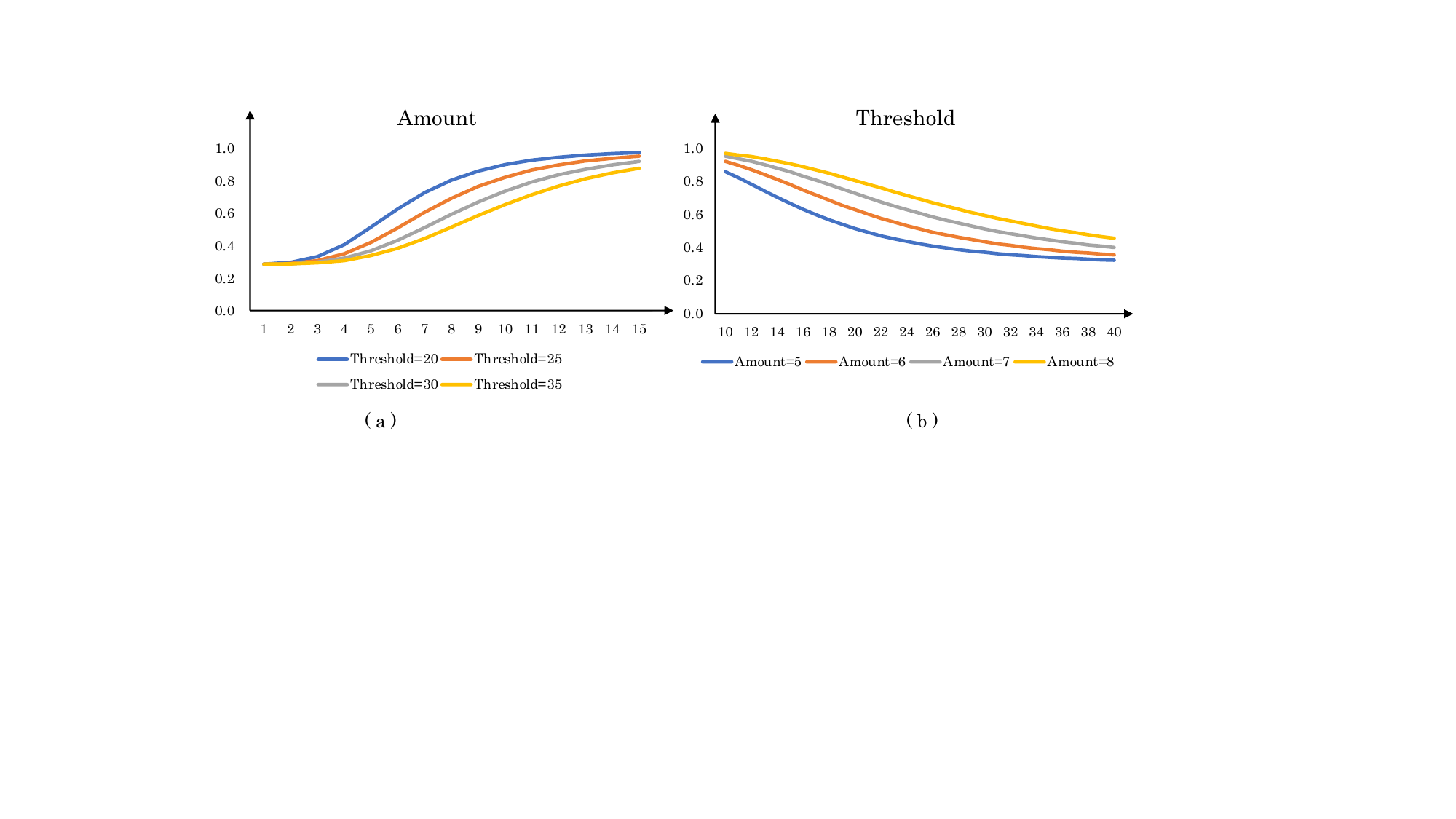}
  \caption{Amount}
  \label{fig:amount}
\end{subfigure} % \quad provides some space between the subfigures
\begin{subfigure}[b]{0.23\textwidth}
  \includegraphics[width=\linewidth]{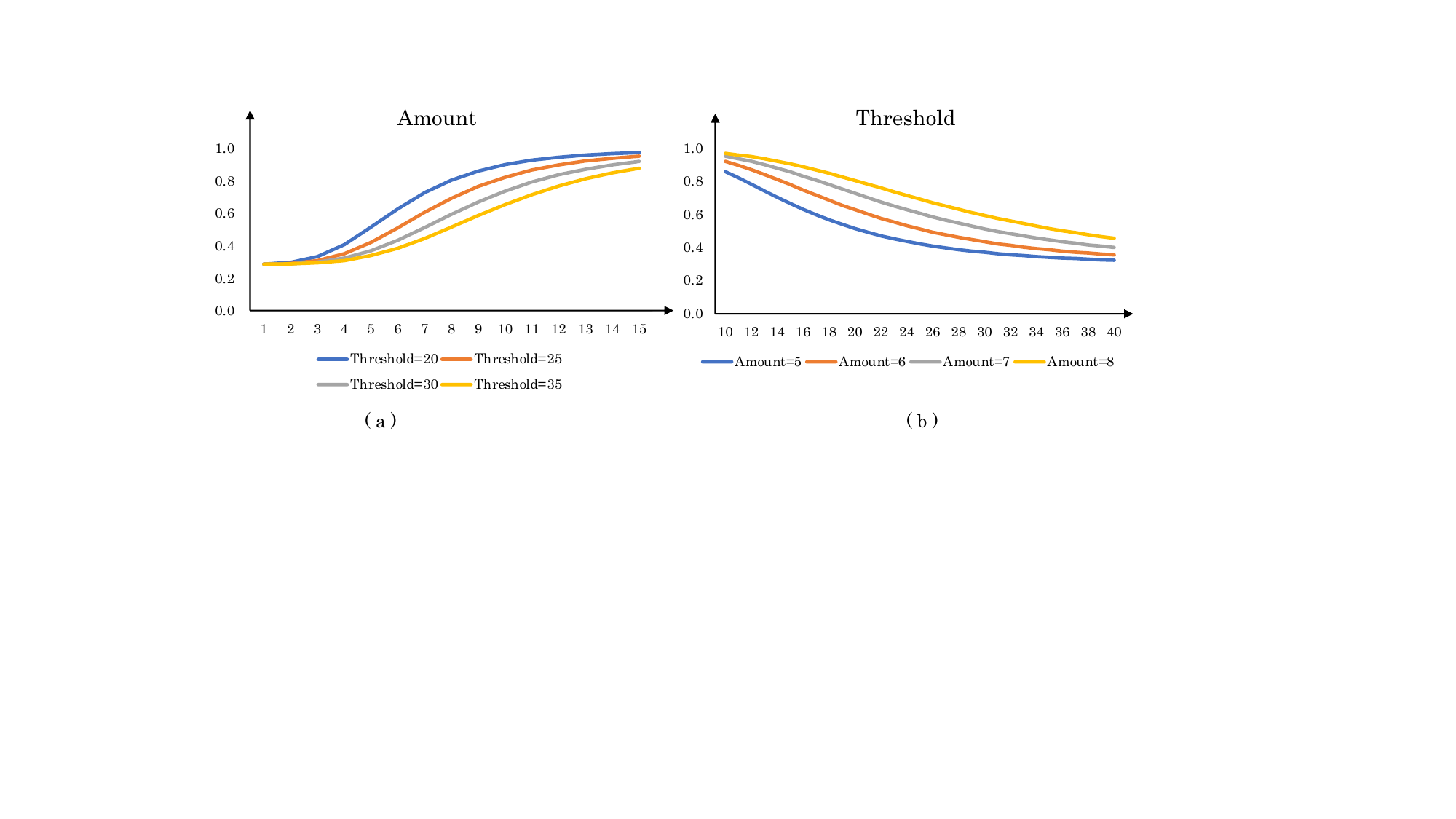}
  \caption{Threshold}
  \label{fig:threshold}
\end{subfigure}
\caption{Monotonic response curves of user sensitivity to incentives with different amounts and thresholds}
\label{fig:main}
\end{figure}

%% Figure 5
To visualize the effectiveness of our model in budget allocation compared to previous marketing methods, we follow the idea proposed by \cite{yu2021joint,liu2019graph} and use the first derivative of the response curve at the inflection point as the "\textit{gradient}" representing a user's sensitivity to incentives. Under the same incentive treatment, user groups with high "\textit{gradient}" values yield greater commercial returns compared to those with low "\textit{gradient}". Therefore, the efficiency of the marketing model can be attributed to the accurate estimation of response scores for users with varying incentive sensitivities. Subsequently, we rank the users in descending order based on their "\textit{gradient}" and divide them into five groups representing different levels of sensitivity to incentives. As illustrated in Figure \ref{fig:uplift}, CoMAN predicts a higher uplift gain for the first group of highly sensitive users compared to SBBM and FPM, while predicting a correspondingly lower uplift gain for the least sensitive users. This demonstrates our model's higher efficiency in budget allocation.

\textbf{Ablation Study}
To comprehensively understand the capability of our proposed CoMAN model in approximating incentive sensitivity functions, and recognizing that the S-t attention and adaptive activation modules constitute the core modeling components of CoMAN, we carried out exhaustive ablation studies to evaluate the effectiveness of these pivotal modules. Accordingly, we introduced three ablation variants: (i) CoMAN w/o (S-t and AA) (CoMAN-B), this baseline version employs CMNN based on CLU combined with FPM as the final activation function.
(ii) CoMAN w/o AA, this revision excludes the Adaptive Activation module from the CoMAN model.
(iii) CoMAN w/o S-t, this variant omits the S-t Attention module from the CoMAN structure. These versions are evaluated against the complete CoMAN model, which integrates all the modules, to systematically assess the impact of each module on the overall model performance. Additionally, the comparative experiments of CMNN without the spatio-temporal activation and the spatio-temporal target attention modules.

To validate the monotonic response property of the CoMAN, we conduct offline tests in which all non-monotonic features are held constant, while the monotonic features (e.g., amount and threshold) increase. Then construct the CVR monotonic response curves, as illustrated in Figures~\ref{fig:amount} and \ref{fig:threshold} respectively.
%% ablation observation
From the results shown in the above tables, we have the following observations:
(i) As our analysis suggests, models that leverage spatio-temporal information in the monotonic layer to enhance the perception of the monotonic response outperform models that consider only basic attributes and behavioral representations (e.g., FPM, CMNN) or incorporate spatio-temporal capture solely in the embedding (e.g., CoMAN-B). (ii) The performance of single-module models in the monotonic layer is significantly lower than that of models employing a two-module perception framework. This finding is consistent with our design rationale, indicating that two-module perception of spatio-temporal information more effectively captures the additional semantics inherent in user monotonic responses. (iii) As anticipated, employing the Adaptive Activation module facilitates more adaptive learning of the concavity and convexity proportions of the response function by harnessing spatio-temporal information. Concurrently, utilizing the S-t Attention module enables a more precise spatio-temporal representation of key parameters in the sensitive response curves.

%% visual distribution
Moreover, as observed in Figure~\ref{fig:overall-distribution}, we evaluate the model's online predicted monotonic response curves both overall and across five periods, as well as for the top three cities with the most users. The fitting results for both the ablated and complete versions of CoMAN more closely approximate the ground truth distribution compared to the baseline, further verifying CoMAN's effective capture of spatio-temporal characteristics in real-world marketing campaigns.
%%%%%%%%%%%%%%%%%%%%%%%%%%%%%%%%%%%%%%%%%%%%%%%%%%%%%%%%%%%%%%%%%%%%%%%%

\textbf{Online A/B Test}
We deploy our methodology to the Ele.me and conduct a two-week online A/B test. Compared with the online baseline model, our proposed model can bring more benefits while maintaining platform subsidies unchanged or even reduced, helping the marketing business improve budget efficiency and bring more growth. Detailed online outcomes as shown in Table~\ref{tab:online_metrics}.

Notably, within the regional pricing strategy for the delivery fee waiver marketing campaign, the CoMAN model also facilitates substantial business benefits for the platform. This is achieved even with a reduction in subsidy intensity by 0.76\%, leading to a 0.70\% increase in orders growth, a 0.62\% improvement in the CTCVR, and a 1.06\% increment in GMV.
\vspace{-1em}
\begin{table}[!htbp]
  \caption{Online Business Performance Metrics. In comparison to the online FPM method, the CoMAN model demonstrated an improvement rate in the marketing of Ele.me.}
  \label{tab:online_metrics}
  \centering
  \begin{tabular}{lcccccc}
  \toprule
    \textbf{Model} & \textbf{CVR} & \textbf{GMV} & \textbf{Orders}\\
    \midrule
    CoMAN-B & +0.82\% &	+0.10\% &	+0.45\% \\
    CoMAN w/o AA &	+1.20\% &	+0.51\% &	+1.56\% \\
    CoMAN w/o S-t &	+2.18\% &	+3.02\% &	+3.57\% \\
    CoMAN &	+3.09\% &+3.63\% & +3.68\% \\
    \bottomrule
  \end{tabular}
\end{table}
\vspace{-2em}
\section{Conclusion}
\label{Conclusion}
In this paper, we propose a novel constrained monotonic response model CoMAN for enhancing monotonic modeling by adaptive spatio-temporal awareness in diverse takeaway marketing. Specifically, we employ two spatio-temporal perception modules to activate and capture the spatio-temporal traits in the attribute representations. Subsequently, within the monotonic layer, we design two modules that utilize spatio-temporal information to enhance adaptive learning of concavity and convexity, as well as the sensitivity function expression. Comprehensive offline and online A/B testing conducted within two marketing campaigns on the Ele.me platform demonstrates the superiority of our proposed approach. Eventually,  studies prove that our model more accurately reflects users' incentive sensitivity across different locations and periods and exhibits superior pricing performance in various spatio-temporal dimensions, thus achieving the goal of improved marketing efficiency. Furthermore, future research will delve into the finer extraction of user sensitivity differences to incentives across disparate spatio-temporal dimensions.
\bibliographystyle{ACM-Reference-Format}
\bibliography{mybibfile}

%%
%% If your work has an appendix, this is the place to put it.
\newpage
\appendix

\section*{APPENDIX}
\section{Marketing Monotonic}

In the Ele.me's marketing campaigns, we indicate example attributes that users will be sensitive to in Figure~\ref{fig:background} below. Such monotonic characteristics are also different in different marketing tools. Specifically, in smart full discounts, the user's conversion rate will show a monotonic downward trend with respect to the monotonic increase in the full discount threshold, while with respect to the full discount amount It shows the opposite nature; in the delivery fee wavier, the user's conversion rate  shows a monotonic decreasing trend with the increase in actual paid delivery fees; in the explosive red envelope, it is similar to the intelligent full reduction, with regard to the exploding red packets. It shows a monotonically increasing trend, and the red envelope threshold shows a monotonically decreasing trend.
\begin{figure}[h]
\centering
\includegraphics[width=8.5cm]{figs/background.pdf}
\caption{Monotonic Properties of Marketing Campaign. Enclosed within the red box in the figure are the monotonic characteristics of various marketing campaigns. Sequentially from left to right, the attributes represented are the discount thresholds and amounts in the tiered discount, followed by the actual and nominal delivery fees in the delivery fee waiver, and concluding with the coupon amounts and thresholds in the exploding red packets.}
\label{fig:background}
\end{figure}
This paper aims to accurately characterize the sensitivity relationship between user response conversion probability and incentive attributes and to ensure that such predictions maintain high accuracy under varying spatio-temporal conditions.
% \begin{figure*}[h]
% \centering
% \includegraphics[width=14.5cm]{figs/monotonic_modeling.pdf}
% \caption{Monotonic Modeling. Road map of monotonic response modeling. Milestones in the figure: Constrain the weights\cite{archer1993application},Monotonic Networks\cite{sill1997monotonic}, Monotonic Hints\cite{sill1996monotonicity}, Monotonic Lattices\cite{milani2016fast}, DLN\cite{you2017deep}, PWL\cite{gupta2019incorporate}, COMET\cite{sivaraman2020counterexample}, MILP\cite{liu2020certified}, CMNN\cite{runje2023constrained}}
% \label{fig:monotonic_modeling}
% \end{figure*}

\section{Detailed Proofs}

%%B.1 universal approximation
\subsection{Monotone Universal Approximation}
\label{B-1}
We provide comprehensive proofs for all the theorems and lemmas presented in the main text of the paper here. The proof of the universal approximation theorem for monotone functions \cite{daniels2010monotone} is presented as follows:
\begin{theorem}
    for any continuous monotone nondecreasing function $f: K \rightarrow \mathbb{R}$, where $K$ is a compact subset of $\mathbb{R}^k$, there exists a feedforward neural network using the sigmoid as the activation function with at most $k$ hidden layers, positive weights,and output $O$ such that $|O_x - f(x)| < \epsilon$,for any $x\in K$ and $\epsilon>0$.
    \label{theorem1}
\end{theorem}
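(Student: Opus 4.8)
The plan is to prove the statement by induction on the dimension $k$, exploiting throughout that $f$ is uniformly continuous on the compact set $K$, so that every approximation below can be made uniform rather than pointwise.

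For the base case $k=1$ I would take $f:[a,b]\to\mathbb{R}$ continuous and nondecreasing, fix a mesh $t_0<t_1<\dots<t_N$ of $[a,b]$ finer than the modulus of continuity associated with $\epsilon$, and set $c_i=f(t_i)-f(t_{i-1})$. Monotonicity is exactly the statement $c_i\ge 0$, so the staircase $f(t_0)+\sum_{i=1}^N c_i\,\mathbf{1}[x\ge t_i]$ is uniformly within $\epsilon/2$ of $f$; replacing each indicator by $\sigma(\lambda(x-t_i))$ with $\lambda$ large enough yields a one-hidden-layer network $O_x=f(t_0)+\sum_i c_i\,\sigma(\lambda(x-t_i))$ that is within $\epsilon$ of $f$ and whose input weight $\lambda$ and output weights $c_i$ are all positive (the constant $f(t_0)$ is carried as a bias).

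For the inductive step, assume the claim in dimension $k-1$ and write $x=(x',x_k)$ with $x'\in\mathbb{R}^{k-1}$. Let $[\alpha,\beta]$ contain the last coordinate of $K$, and choose $\alpha=\xi_0<\dots<\xi_M=\beta$ fine enough that the oscillation of $f$ over one $x_k$-cell is below $\epsilon/3$. Each slice $f_j(x'):=f(x',\xi_j)$ is continuous and monotone nondecreasing on a compact subset of $\mathbb{R}^{k-1}$, so the induction hypothesis supplies a monotone network $N_j$ with $k-1$ hidden layers, positive weights, and $\|N_j-f_j\|_\infty<\epsilon/3$. I would then glue the $N_j$ together with one further hidden layer by a monotone \emph{soft selection}: because $f_0\le f_1\le\dots\le f_M$ pointwise, the quantity $\max_j\{\,N_j(x')+L\min(0,x_k-\xi_j)\,\}$ with $L$ large equals $N_j(x')$ for $x_k\ge\xi_j$ and is driven strongly negative for $x_k<\xi_j$, so the maximum returns the largest admissible slice, which is the correct one up to the grid error; both $z\mapsto\min(0,z)$ (monotone in $x_k$) and the outer maximum (nondecreasing in each argument) are realizable up to arbitrarily small uniform error by a positive-weight sigmoidal layer, giving $k$ hidden layers with positive weights in total.

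The hard part will be the gluing step. A naive telescoping by the increments $f_j-f_{j-1}$ is not available, since a difference of monotone functions need not be monotone and hence cannot be produced by a positive-weight subnetwork; this is precisely what forces the maximum-based construction. One must then check carefully that (i) the selection can be implemented with nonnegative weights inside a single hidden layer without exceeding the $k$-layer budget, and (ii) the three error sources — the $x_k$-grid, the inductive approximations $N_j$, and the sigmoidal approximation of $\min$ and $\max$ — combine additively and can be jointly forced below $\epsilon$ uniformly on $K$. Propagating the error through the nonlinearities while keeping all weights positive is the delicate point; the remainder is bookkeeping.
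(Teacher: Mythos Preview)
Your base case agrees with the paper's (their Riemann sum of $\int_0^\infty H(x-f^{-1}(v))\,dv$ is exactly your staircase). The inductive step is where you diverge, and the gluing carries a real gap. You ask a single additional hidden layer to realize $\max_j\{N_j(x')+L\min(0,x_k-\xi_j)\}$, but this fails on two counts. First, the arguments to the max already contain the nonlinearity $z\mapsto\min(0,z)$, which itself consumes a sigmoidal layer to approximate; you cannot form it and then pass the result through a further nonlinearity within the \emph{same} layer. Second, even granting those arguments, the outer $\max$ is an $(M{+}1)$-variable monotone function, and ``nondecreasing in each argument'' does not by itself place it in the one-positive-weight-layer class --- that multivariate monotone functions of many inputs may require many layers is precisely the content of the theorem you are proving, so appealing to a one-layer realization of $\max$ is circular. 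As written your construction spends at least $k+1$ hidden layers, and the claim ``giving $k$ hidden layers in total'' is unsupported.

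The paper avoids all of this by slicing in the \emph{range} of $f$ rather than in $x_k$. From $f(x)=\int_0^\infty H(f(x)-v)\,dv$ and the implicit function theorem one obtains $H(f(x)-v)=H\bigl(x_k-g_v(x')\bigr)$ with $-g_v$ monotone nondecreasing in $x'$; the induction hypothesis then yields $(k-1)$-layer positive-weight networks $O_i\approx -g_{v_i}$, and the Riemann sum $\sum_i (v_{i+1}-v_i)\,H\bigl(x_k+O_i(x')\bigr)$ is literally one extra positive-weight layer: each neuron is a sigmoid of a positive affine combination of $x_k$ and $(k-1)$-th-layer outputs, and the output weights $v_{i+1}-v_i$ are nonnegative. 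No maximum, no auxiliary nonlinearity, exactly $k$ hidden layers. If you want to rescue your slicing direction, you would need a gluing that is genuinely a single positive-weight layer in $(N_0,\dots,N_M,x_k)$; the level-set device is exactly such a gluing, but it forces the slices to be taken at levels of $f$, not at grid points of $x_k$.
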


\begin{proof}
    We derive the proof by utilizing mathematical induction on the input variable $k$.  Assuming the general case where the continuous monotone undecreasing function \( f > 0 \) (otherwise, we can add a constant \( C \) and approximate \( f + C \) through the neural network output \( O \), subsequently correcting \( O \) by subtracting the constant \( C \) at the final output). If \( f \) is strictly monotone increasing and \( C^\infty \), then for \( k = 1 \), we have:
    \begin{equation}
        f(x) = \int_{0}^{\infty} \mathbf{H}(f(x) - u)du
        \label{eq16}
    \end{equation}
    where $\mathbf{H}$ is the Heavyside Function:
    \begin{equation}
        \mathbf{H}(x) = \begin{cases}
            1 & if \ x \geq 0 \\
            0 & otherwise
        \end{cases}
    \end{equation}
    Since \( f \) is continuous and increasing, it is therefore invertible. Consequently, the right side of Equation \ref{eq16} can be expressed as:
    \begin{equation}
        f(x) = \int_{0}^{\infty} \mathbf{H}(x - f^{-1}(v))dv
        \label{eq18}
    \end{equation}
    Moreover, the integral can be approximated arbitrarily closely by a Riemann sum, hence we have:
    \begin{equation}
        \sum_{i=1}^{N}(v_{i+1}-v_i)\mathbf{H}(x - f^{-1}(v))
        \label{eq19}
    \end{equation}
    Here, \([v_i]_{i=1}^N\) denotes the partition of the interval \([f(a), f(b)]\). Correspondingly, consider a fully connected neural network with \( N \) hidden neurons in a layer, where the weights of input $x$ are set to 1, the bias term are \( f^{-1}(v_i) \), and the output weights between neurons are \( v_{i+1} - v_i > 0 \). It is also noteworthy that the Heavyside function $\mathbf{H}$ can be substituted with the sigmoid activation function utilizing standard approximation parameters.

    Assume that Theorem \ref{theorem1} holds for \( k-1 \) input variables. We now combine the integral representation in Equation \ref{eq16} with the induction hypothesis. For a given \( v \), we can solve the level set equation for \( x_k \) corresponding to \( v \): $f(x_1,\ldots,x_k) = v$.
    According to the implicit function theorem, there exists a function \( g_v \) such that:
    \begin{equation}
        f(x_1, \ldots, g_v(x_1, \ldots, x_{k-1})) = v
        \label{eq20}
    \end{equation}
    where \( g_v \) is decreasing with respect to \( x_i \). This can be derived by taking the partial derivative of equation \ref{eq20} with respect to \( x_i \). Additionally, we have:
    \begin{equation}
        \mathbf{H}(f(x)-v)=\mathbf{H}(x_k - g_v(x_1, \ldots, x_{k-1}))
    \end{equation}
    Analogous to equation \ref{eq18} in the 1-D case. It remains to show that the discussion about $f(x)< v\ if\ and\ only\ if\ x_k < g_v(x_1,\ldots,x_{k-1})$ and $f(x) > v\ if\ and\ only\ if\ x_k > g_v(x_1,\ldots,x_{k-1})$.
    Similarly, by using the Riemann sum to approximate the integral in equation \ref{eq16}, we obtain the following equation analogous to equation\ref{eq19}:
    \begin{equation}
        R=\sum_{i=1}^{N}(v_{i+1}-v_i)\mathbf{H}(x_k-g_{v_i}(x_1, \ldots, x_{k-1}))
        \label{eq22}
    \end{equation}
    Due to the fact that $g_{v_i}$ is decreasing with respect to all arguments, it implies that $-g_{v_i}$ is increasing. By leveraging the inductive hypothesis, we can approximate $-g_{v_i}$ with a feedforward neural network $O_i$, wherein $x_1, \ldots, x_{k-1}$ are used as inputs. This neural network consists of $k-1$ hidden layers and employs non-negative weights, ensuring that
    \begin{equation}
        |\sum_{i=1}^{N}(v_{i+1}-v_i)\mathbf{H}(x_k-O_{i}(x_1, \ldots, x_{k-1}))-R|<\epsilon
    \end{equation}
    the sum in the above expression is infinite. The expression \ref{eq22} can be represented as a feedforward neural network with $k$ inputs and $k$ hidden layers. Here, $k-1$ hidden layers are utilized to approximate $-g_{v_i}$, while the $k$-$th$ hidden layer is employed to integrate the $N$ neural networks that produce outputs $O_i$ and accept $x_k$ as input. The weights connecting the final hidden layer to the output layer are defined as \( v_{i+1} - v_i > 0 \). Additionally, the input $x_k$ is directly skip connected to the $k$-$th$ hidden layer.
    The proof can be readily generalized to encompass continuous non-decreasing functions. For a continuous function $f$ we define its convolution with a mollifier $K_{\delta}$ as $f_\delta=f\bigotimes K_\delta$.
    
    Therefore, $f_\delta$ belongs to $C_\infty$ and converges uniformly to $f$ on compact subsets as $\delta$ approaches 0. Additionally, $f_\delta$ remains an increasing function because $K_\delta>0$. Select $\delta$ such that the inequality $|f-f_\delta|<\frac{\epsilon}{2}$ is satisfied. We then approximate $f_\delta$ using a feedforward neural network $O$ such that $|f_\delta-O|<\frac{\epsilon}{2}$. Consequently, it follows that $|f-O|<\frac{\epsilon}{2}$. If $f$ is nondecreasing, then approximate $f$ by $f_\delta$:
    \begin{equation}
        f_\delta = f + \delta(x_1+\ldots+x_k)
    \end{equation}
    The aforementioned expression is strictly monotonically increasing as $\delta$ tends to $0$.
    
\end{proof}

\begin{lemma}
    Let $\breve{\rho} \in \mathcal{\breve{A}}$. Then the Heavyside function $\mathbf{H}(x)$ can be approximated with $\tilde{\rho}_H$ on $\mathbb{R}$, where
    \begin{equation}
        \tilde{\rho}_H(x) = \alpha\tilde{\rho}(x)+ \beta
    \end{equation}
    for some $\alpha, \beta \in \mathbb{R}$ and $\alpha > 0$.
    \label{lemma1}
\end{lemma}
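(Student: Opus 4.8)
The plan is to recognize that the post-composed function $\tilde\rho_H = \alpha\tilde\rho + \beta$ is, for a suitable choice of $\alpha>0$ and $\beta$, a classical \emph{sigmoidal} function — continuous, monotone increasing, with limits $0$ and $1$ at $-\infty$ and $+\infty$ — and then invoke the standard ``squashing'' argument that any such function, after rescaling its argument, converges to the Heaviside function $\mathbf{H}$; this is exactly the ``standard approximation parameters'' step already used in the proof of Theorem~\ref{theorem1}.

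First I would unpack the hypotheses on $\breve\rho\in\mathcal{\breve{A}}$. Since $\breve\rho$ is monotone increasing and lower-bounded, the limit $L:=\lim_{x\to-\infty}\breve\rho(x)=\inf_x\breve\rho(x)$ exists and is finite, and $L\le\breve\rho(0)=0$; in the non-degenerate case of interest (which covers ReLU, ELU, softplus, CLU, etc.) one has $\breve\rho(1)>L$, so $c_0:=\breve\rho(1)-L>0$. From $\hat\rho(x)=-\breve\rho(-x)$ one reads off that $\hat\rho$ is concave, increasing, upper-bounded, with $\lim_{x\to+\infty}\hat\rho(x)=-L$. Substituting these facts into the piecewise definition of $\tilde\rho$, I would verify: (i) $\tilde\rho$ is continuous, the two branches agreeing at $x=0$ with common value $\breve\rho(1)-\breve\rho(1)=0$ on the left and $\hat\rho(-1)+\breve\rho(1)=-\breve\rho(1)+\breve\rho(1)=0$ on the right; (ii) $\tilde\rho$ is monotone increasing, since $x\mapsto\breve\rho(x+1)$ and $x\mapsto\hat\rho(x-1)$ are both increasing; (iii) $\tilde\rho$ is odd, because $\breve\rho(1-x)-\breve\rho(1)=-\big(\hat\rho(x-1)+\breve\rho(1)\big)$; and (iv) $\lim_{x\to-\infty}\tilde\rho(x)=L-\breve\rho(1)=-c_0$ while $\lim_{x\to+\infty}\tilde\rho(x)=-L+\breve\rho(1)=c_0$. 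Hence $\tilde\rho$ is a continuous, monotone increasing function with range $(-c_0,c_0)$, symmetric about the origin.

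Next I would fix the affine parameters: take $\alpha=\tfrac{1}{2c_0}>0$ and $\beta=\tfrac12$. A direct check shows $\tilde\rho_H=\alpha\tilde\rho+\beta$ is continuous, monotone increasing, maps into $(0,1)$, with $\lim_{x\to-\infty}\tilde\rho_H(x)=\alpha(-c_0)+\tfrac12=0$ and $\lim_{x\to+\infty}\tilde\rho_H(x)=\alpha c_0+\tfrac12=1$, so $\tilde\rho_H$ is sigmoidal in Cybenko's sense. Finally, the approximation of $\mathbf H$ is the standard squashing step: given $\epsilon,\delta>0$, monotonicity and the two limits furnish a scale $c>0$ with $\tilde\rho_H(c\delta)>1-\epsilon$ and $\tilde\rho_H(-c\delta)<\epsilon$, so that $\lvert\tilde\rho_H(cx)-\mathbf H(x)\rvert<\epsilon$ whenever $\lvert x\rvert\ge\delta$; since $0\le\tilde\rho_H\le1$, the discrepancy on the remaining interval $(-\delta,\delta)$ is bounded by $1$ on a set of length $2\delta$, which is precisely what is required to substitute $\tilde\rho_H(c\,\cdot\,)$ for $\mathbf H$ inside the Riemann sums appearing in the proof of Theorem~\ref{theorem1}.

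The step I expect to be the main obstacle is item~(iv) above — establishing that $\tilde\rho$ genuinely possesses two finite and \emph{distinct} horizontal asymptotes. This is where ``lower-bounded'' is essential (it gives the finite limit $L$ on the left, and through the reflection $\hat\rho$ on the right as well), and where one must exclude the degenerate situation $\breve\rho(1)=L$ so that $c_0>0$ and $\alpha$ is well defined; everything after that reduces to the elementary squashing bookkeeping.
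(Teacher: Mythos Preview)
Your proposal is correct and follows essentially the same route as the paper: compute the two asymptotic limits of $\tilde\rho$ from the lower-boundedness of $\breve\rho$, normalize by the affine map $\alpha=\tfrac{1}{2c_0}$, $\beta=\tfrac12$ (the paper writes this as $\tilde\rho_{\mathbf H}(x)=\frac{\tilde\rho(x)-c+\breve\rho(1)}{2(-c+\breve\rho(1))}$ with $c=\lim_{x\to-\infty}\breve\rho(x)$, which is the same $\alpha,\beta$), and then conclude via $\lim_{a\to\infty}\tilde\rho_{\mathbf H}(ax)=\mathbf H(x)$. Your treatment is in fact more careful than the paper's on two points: you verify continuity and oddness of $\tilde\rho$ explicitly, and you isolate the non-degeneracy requirement $c_0=\breve\rho(1)-L>0$ (the paper simply asserts $c<0$, which fails for ReLU where $c=0$, whereas your condition $\breve\rho(1)>L$ is the right one and holds for all the examples in $\mathcal{\breve A}$).
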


\begin{proof}
    Given that $\breve{\rho}$ has a lower bound, taking the limit yields:
    \begin{equation}
        c = \lim_{x \to -\infty}\breve{\rho}(x) = - \lim_{x \to +\infty}\hat{\rho}(x) < 0
    \end{equation}
    From equation \ref{m-eq11}, we have
    \begin{equation}
        \begin{aligned}
        &\lim_{x \to -\infty}\tilde{\rho}(x) = \lim_{x \to -\infty}\breve{\rho}(x) - \breve{\rho}(1) = c - \breve{\rho}(1) \\
        &\lim_{x \to +\infty}\tilde{\rho}(x) = \lim_{x \to +\infty}\hat{\rho}(x) + \hat{\rho}(1) = -(c - \breve{\rho}(1))
        \end{aligned}
    \end{equation}
    Let $\tilde{\rho}_{\mathbf{H}}(x)$ be defined as follows:
    \begin{equation}
        \tilde{\rho}_{\mathbf{H}}(x) = \frac{\tilde{\rho}(x) - c + \breve{\rho}(1)}{2(- c + \breve{\rho}(1))}
    \end{equation}
    Then
    \begin{equation}
        \lim_{a \to \infty}\tilde{\rho}_{\mathbf{H}}(a\cdot x) = \mathbf{H}(x) 
    \end{equation}
\end{proof}

\begin{lemma}
    Let $\tilde{\rho}_{\alpha,\beta}$ be an activation function for some $\alpha,\beta \in \mathbb{R}, \alpha>0$, such that for every $x\in \mathbb{R}$
    \begin{equation}
        \tilde{\rho}_{\alpha,\beta} = \alpha\tilde{\rho}(x) + \beta
    \end{equation}
    Then for every constrained montone neural network $\mathcal{N}_{\alpha, \beta}$ using $\tilde{\rho}_{\alpha, \beta}$ as an activation function $(s=(0,0,\tilde{s})$, there is a constrained monotone neural network $\mathcal{N}$ using $\tilde{\rho}$ as an activation function such that for every $\mathbf{x} \in \mathbb{R}^n$:
    \begin{equation}
        \mathcal{N}(\mathbf{x})=\mathcal{N}_{\alpha, \beta}(\mathbf{x})
    \end{equation}
    \label{lemma2}
\end{lemma}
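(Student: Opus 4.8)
The plan is to show that the affine reparametrisation $\tilde{\rho}_{\alpha,\beta}(x)=\alpha\tilde{\rho}(x)+\beta$ can be absorbed into the weights and biases of the surrounding layers, so that any network built from $\tilde{\rho}_{\alpha,\beta}$ can be rewritten as an equivalent network built from $\tilde{\rho}$ itself, while preserving the sign constraints on the weights imposed by equation~\ref{eq10}. I would proceed layer by layer through the network $\mathcal{N}_{\alpha,\beta}$. Consider a single hidden unit whose pre-activation is $h=\sum_i w_i' x_i + b$ and whose output $\tilde{\rho}_{\alpha,\beta}(h)=\alpha\tilde{\rho}(h)+\beta$ feeds into downstream units via weights $u_j'$. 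The key observation is that $\alpha\tilde{\rho}(h)$ is exactly the output of a unit with activation $\tilde{\rho}$ whose outgoing weights have been rescaled: replacing each downstream weight $u_j'$ by $\alpha u_j'$ (legal, since $\alpha>0$ does not change the sign and hence does not violate the monotonicity constraint) reproduces the $\alpha\tilde{\rho}(h)$ contribution. The additive constant $\beta$ contributes a fixed amount $\beta u_j'$ to the pre-activation of each downstream unit $j$, which can be folded into that unit's bias term $b_j$ (bias terms are unconstrained in equation~\ref{eq10}).

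Carrying this out systematically: I would define $\mathcal{N}$ to have the same architecture, the same constrained weights on all layers, the incoming weights of every $\tilde{\rho}$-unit identical to those in $\mathcal{N}_{\alpha,\beta}$, the outgoing weights of every such unit multiplied by $\alpha$, and the bias of every unit in layer $\ell+1$ adjusted by adding $\sum_{k}\beta\,(\text{outgoing weight from unit }k\text{ in layer }\ell)$ summed over the units $k$ in layer $\ell$. One then checks by a straightforward induction on the layer index that the pre-activation of every unit in $\mathcal{N}$ equals the pre-activation of the corresponding unit in $\mathcal{N}_{\alpha,\beta}$, whence the outputs agree after applying $\tilde{\rho}$ respectively $\tilde{\rho}_{\alpha,\beta}$; at the output layer (which is linear, or can be handled with the same bookkeeping) this gives $\mathcal{N}(\mathbf{x})=\mathcal{N}_{\alpha,\beta}(\mathbf{x})$ for all $\mathbf{x}\in\mathbb{R}^n$. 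The base case is the first hidden layer, where the pre-activations are literally the same affine functions of the input $\mathbf{x}$ since we kept the incoming weights and original biases unchanged there.

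The only thing requiring care — and the main obstacle — is verifying that the rescaling by $\alpha$ never violates a monotonicity constraint. Since $\alpha>0$, multiplying a weight $w_{j,i}'$ by $\alpha$ preserves its sign, so a weight that was forced to be $|w_{j,i}|$ (case $t_i=1$) remains positive and a weight forced to be $-|w_{j,i}|$ (case $t_i=-1$) remains negative; unconstrained weights stay unconstrained. The bias adjustments are harmless because biases carry no sign constraint. Hence the constructed $\mathcal{N}$ is a legitimate constrained monotone network using $\tilde{\rho}$, and it computes the same function as $\mathcal{N}_{\alpha,\beta}$, which is exactly the claim. (If $\alpha$ were allowed to be negative this argument would fail, which is why the hypothesis $\alpha>0$ is essential; I would remark on this explicitly.)
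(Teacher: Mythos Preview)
Your proposal is correct and takes essentially the same approach as the paper's proof: absorb the $\alpha$ into the next layer's weight matrix and the $\beta$ into its bias, then check by induction on the layer index that all pre-activations coincide, using $\alpha>0$ to guarantee that $|\alpha W_k|_{t_k}=\alpha|W_k|_{t_k}$ so the sign constraints are preserved. One small wording slip in your second paragraph: saying both that ``incoming weights of every $\tilde{\rho}$-unit [are] identical'' and that ``outgoing weights of every such unit [are] multiplied by $\alpha$'' is inconsistent, since outgoing weights of layer $\ell$ \emph{are} the incoming weights of layer $\ell+1$; the correct construction (which the paper writes explicitly) keeps $W_1,b_1$ unchanged and sets $W_k'=\alpha W_k$, $b_k'=b_k+\beta|W_k|_{t_k}\mathbf{1}$ for $k\geq 2$.
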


\begin{proof}
    Let $h_k$ be the output of the $k$-$th$ constrained linear layer of a constrained monotonic neural network $\mathcal{N}_{\alpha, \beta}$ when given input $x$. For each $k>1$, we have:
    \begin{equation}
        \begin{aligned}
        \mathbf{h}_1 & =\left|\mathbf{W}_1\right|_{\mathbf{t}_1} \cdot \mathbf{x}+\mathbf{b}_1 \\
        \mathbf{h}_k & =\left|\mathbf{W}_k\right|_{\mathbf{t}_k} \cdot \mathbf{y}_{k-1}+\mathbf{b}_k \\
        \mathbf{y}_k & =\tilde{\rho}_{\alpha, \beta}\left(\mathbf{h}_k\right) \\
        \mathbf{y} & =\mathbf{h}_l
        \end{aligned}
    \end{equation}
    Let $l$ denote the total number of layers in $\mathcal{N}_{\alpha, \beta}$. Accordingly, $\mathbf{h}_k$ can be reformulated as follows:
    \begin{equation}
        \begin{aligned}
        \mathbf{h}_k &=\left|\mathbf{W}_k\right|_{\mathbf{t}_k} \cdot \mathbf{y}_{k-1}+\mathbf{b}_k \\
        & =\left|\mathbf{W}_k\right|_{\mathbf{t}_k} \cdot \tilde{\rho}_{\alpha, \beta}\left(\mathbf{h}_{k-1}\right)+\mathbf{b}_k \\
        & =\left|\mathbf{W}_k\right|_{\mathbf{t}_k} \cdot\left(\alpha \tilde{\rho}\left(\mathbf{h}_{k-1}\right)+\beta \mathbf{1}\right)+\mathbf{b}_k \quad \quad (\text {with}|\mathbf{1}|=\left|\mathbf{h}_{k-1}\right|) \\
        & =\alpha\left|\mathbf{W}_k\right|_{\mathbf{t}_k} \cdot \tilde{\rho}\left(\mathbf{h}_{k-1}\right)+\beta\left|\mathbf{W}_k\right|_{\mathbf{t}_k} \mathbf{1}+\mathbf{b}_k \\
        & =|\alpha \mathbf{W}_k|_{\mathbf{t}_k} \cdot \tilde{\rho}\left(\mathbf{h}_{k-1}\right)+\beta\left|\mathbf{W}_k\right|_{t_k} \mathbf{1}+\mathbf{b}_k \quad\  (\text {from } \alpha>0) \\
        & =\left|\mathbf{W}_k^{\prime}\right|_{t_k} \cdot \tilde{\rho}\left(\mathbf{h}_{k-1}\right)+\mathbf{b}_k^{\prime} \\
        &
        \end{aligned}
    \end{equation}  
    for $\mathbf{W}_k^\prime =\alpha \mathbf{W}_k$ and $\mathbf{b}_k\prime =\beta|\mathbf{W}_k|_{t_k}\mathbf{1}+\mathbf{b}_k$. Consequently, for each $x\in \mathbb{R}$, the output of the network $\mathcal{N}$ with weights $\mathbf{W}_1, \mathbf{W}_2^\prime, \dots, \mathbf{W}_l^\prime$ and biases $\mathbf{b}_1, \mathbf{b}_2^\prime, \dots, \mathbf{b}_l^\prime$ can be expressed as $\mathcal{N}(\mathbf{x})=\mathcal{N}_{\alpha, \beta}(\mathbf{x})$.
    
\end{proof}

\begin{theorem}
    Let $\breve{\rho} \in \mathcal{\breve{A}}$. Then any multivariate continuous monotone function f on a compact subset of $\mathbb{R}^k$ can be approximated with a monotone constrained neural network of at most $k$ layers using $\rho$ as the activation function.
\end{theorem}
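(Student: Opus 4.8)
The plan is to reduce the statement to the three results already established in Appendix~\ref{B-1}: the classical monotone universal approximation theorem (Theorem~\ref{theorem1}), the approximation of the Heaviside step by a scaled-and-shifted saturated activation (Lemma~\ref{lemma1}), and the weight-reparametrization lemma (Lemma~\ref{lemma2}). The skeleton has three steps: (i) approximate $f$ by a monotone, nonnegative-weight feedforward network whose only nonlinearity is the Heaviside (equivalently sigmoid) step, using at most $k$ hidden layers; (ii) replace every step nonlinearity by a suitably scaled and shifted copy $\tilde{\rho}_{\alpha,\beta}(x)=\alpha\tilde{\rho}(x)+\beta$ of the saturated activation built from $\breve{\rho}$; (iii) fold the positive scale $\alpha$ and the shift $\beta$ back into the constrained linear layers, so that the final network uses $\tilde{\rho}$ itself, i.e. the combined activation $\rho^{\mathbf{s}}$ with selection vector $\mathbf{s}=(0,0,m)$.

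First I would invoke Theorem~\ref{theorem1}. Assume without loss of generality that $f$ is nondecreasing in every coordinate — coordinates in which $f$ is decreasing are handled by pre-composing with the sign flip $x_i\mapsto -x_i$, which is precisely what the entries $t_i=-1$ of the monotonicity indicator realize through Eq.~\ref{eq10} — and that $f>0$ on the compact domain $K$, by adding a constant that is subtracted back at the output. Theorem~\ref{theorem1} then yields, for any $\epsilon>0$, a feedforward network $O$ of at most $k$ hidden layers, nonnegative weights, and Heaviside activation $\mathbf{H}$, with $|O_x-f(x)|<\epsilon$ on $K$; the strictly-increasing-and-smooth reduction (mollification $f_\delta=f\bigotimes K_\delta$ together with the perturbation $f+\delta(x_1+\dots+x_k)$) is carried out exactly as in the proof of that theorem. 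Because $K$ is compact and all Riemann sums in the construction are finite, only finitely many Heaviside units appear, each evaluated on a bounded range of pre-activations.

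Next I would apply Lemma~\ref{lemma1}, which provides $\alpha>0$ and $\beta\in\mathbb{R}$ with $\lim_{a\to\infty}\tilde{\rho}_{\mathbf{H}}(a\cdot x)=\mathbf{H}(x)$, where $\tilde{\rho}_{\mathbf{H}}=\alpha\tilde{\rho}+\beta$; this uses that $\breve{\rho}$ convex, monotone increasing, lower-bounded and zero-centred forces $\tilde{\rho}$ to be genuinely saturated (bounded above and below), the hypothesis the lemma needs. Since only finitely many units are involved, one can pick a single large scale $a$ so that substituting $\mathbf{H}$ by $\tilde{\rho}_{\mathbf{H}}(a\cdot)$ in every unit perturbs the network output by at most $\epsilon$ uniformly on $K$; absorbing the factor $a>0$ into the incoming nonnegative-constrained weights preserves both the depth and the monotonicity constraints, giving a constrained monotone network of at most $k$ layers with activation $\tilde{\rho}_{\alpha,\beta}$ that approximates $f$ within $2\epsilon$. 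Finally, Lemma~\ref{lemma2} rewrites this network verbatim as a constrained monotone network of the same depth whose activation is $\tilde{\rho}$, i.e. $\rho^{\mathbf{s}}$ with $\mathbf{s}=(0,0,m)$; subtracting the additive constant from step one completes the approximation. The recursion is consistent because the inner layers of $O$, which themselves approximate the level-set maps $-g_{v_i}$, are of the same Heaviside/sigmoid form and so admit the same substitution.

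The main obstacle I anticipate is the uniform error bookkeeping in step (ii): the limit $\tilde{\rho}_{\mathbf{H}}(a\cdot x)\to\mathbf{H}(x)$ is not uniform near $x=0$, so one must show that the exceptional set where some pre-activation lies within $O(1/a)$ of its own threshold contributes only negligibly to the finite weighted sum, and that composing this approximation through the $k$ nested layers does not amplify the error. Once the substitution is controlled layer by layer and the scale $a$ is absorbed into the constrained weights, the remaining steps are routine applications of Lemmas~\ref{lemma1} and~\ref{lemma2} and the reduction already used in Theorem~\ref{theorem1}.
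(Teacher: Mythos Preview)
Your proposal is correct and follows essentially the same three-step route as the paper: invoke Theorem~\ref{theorem1} to get a Heaviside/sigmoid monotone network, use Lemma~\ref{lemma1} to swap $\mathbf{H}$ for $\tilde{\rho}_{\mathbf{H}}=\alpha\tilde{\rho}+\beta$, and then use Lemma~\ref{lemma2} to absorb $\alpha,\beta$ into the constrained weights. If anything, your treatment is more careful than the paper's, which states these steps tersely and does not discuss the uniform-error bookkeeping near the thresholds that you rightly flag as the only delicate point.
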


\begin{proof}
    The Heaviside function can be approximated by the sigmoid function over a closed interval, as $\lim_{a \to \infty} \sigma(ax)=\mathbf{H}(x)$.
    According to Theorem \ref{theorem1} and Lemma \ref{lemma1}, any continuous monotonic function $f$ defined on a compact subset of $\mathbb{R}^k$ can be approximated with a monotonic constrained neural network of at most $k$ layers using the activation function $\tilde{\rho}_{\mathbf{H}}$ .

    From Lemma \ref{lemma2}, we can get that any continuous monotone function $f$ on a compact subset of $\mathbb{R}^k$ can be approximated with a monotone constrained neural network with at most $k$ layers using $\rho$ as the activation function.
\end{proof}

%%B.2 Convex（正文：详见appendix）
\subsection{Convex CLU}
\label{B-2}
% CLU convex
The rigorous proof of the convexity of the proposed CLU function is presented to meet the activation function assumption required for the CMNN framework.
\begin{proof}
% 引用一元函数二阶导数>0即为凸函数
The proof of concavity and convexity can be established through the second derivative test theorem as follows.

\begin{theorem}
    Let $f(x)$ be a continuously differentiable function defined on the interval $[a,b]$, possessing a second derivative on the open interval $(a,b)$.
Then for all $x\in(a,b)$, we have: 
If $f^{\prime\prime}(x)\geq0$, then the $f(x)$ is concave on the interval $[a,b]$.
If $f^{\prime\prime}(x)\leq0$, then the $f(x)$ is concave on the interval $[a,b]$.
\label{theorem_convex}
\end{theorem}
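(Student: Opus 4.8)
The plan is to prove this as the classical second-derivative test, reducing everything to the Mean Value Theorem. First I would fix the definition I am working with: $f$ is convex on $[a,b]$ if for all $x_1, x_2 \in [a,b]$ and every $t \in [0,1]$ one has $f(t x_1 + (1-t) x_2) \le t f(x_1) + (1-t) f(x_2)$, with the reversed inequality defining concavity. I note in passing that the two branches as stated both read ``concave''; the intended reading is clearly that $f'' \ge 0$ forces convexity while $f'' \le 0$ forces concavity, and I would prove exactly that. Since the concave case follows from the convex case applied to $-f$ (as $(-f)'' = -f''$ reverses the sign hypothesis and each inequality), I would carry out the argument once, for the implication $f'' \ge 0 \Rightarrow f$ convex.

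First I would show that $f'' \ge 0$ on $(a,b)$ makes $f'$ nondecreasing: for $x_1 < x_2$ in $(a,b)$ the Mean Value Theorem applied to $f'$ gives $f'(x_2) - f'(x_1) = f''(\xi)(x_2 - x_1) \ge 0$ for some $\xi \in (x_1, x_2)$. Next I would fix arbitrary $x_1 < x_2$ in $[a,b]$ and $t \in (0,1)$, set $x_t = t x_1 + (1-t) x_2$ so that $x_1 < x_t < x_2$, and apply the Mean Value Theorem separately on $[x_1, x_t]$ and on $[x_t, x_2]$ to obtain points $\xi_1 \in (x_1, x_t)$ and $\xi_2 \in (x_t, x_2)$ with $f(x_t) - f(x_1) = f'(\xi_1)(x_t - x_1)$ and $f(x_2) - f(x_t) = f'(\xi_2)(x_2 - x_t)$.

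The key algebraic step is then to substitute the identities $x_t - x_1 = (1-t)(x_2 - x_1)$ and $x_2 - x_t = t(x_2 - x_1)$ and to assemble the target inequality. Writing $f(x_t) - t f(x_1) - (1-t) f(x_2) = t\bigl(f(x_t) - f(x_1)\bigr) + (1-t)\bigl(f(x_t) - f(x_2)\bigr)$ and inserting the two Mean Value expressions, a direct computation collapses this to $t(1-t)(x_2 - x_1)\bigl(f'(\xi_1) - f'(\xi_2)\bigr)$. Since $\xi_1 < x_t < \xi_2$ and $f'$ is nondecreasing, the bracket is $\le 0$, whereas the prefactor $t(1-t)(x_2 - x_1)$ is $\ge 0$; hence the whole expression is $\le 0$, which is precisely the convexity inequality.

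There is no serious obstacle here, since the result is a standard textbook fact; the only ``hard part'' is bookkeeping care. I would make sure the stated hypotheses — continuous differentiability on the closed interval together with a second derivative on the open interior — are exactly what licenses each application of the Mean Value Theorem on the subintervals, and I would track signs carefully so the final product is genuinely nonpositive. I would close by invoking the $-f$ reduction mentioned above to deliver the concave conclusion from $f'' \le 0$, thereby settling both branches of the theorem.
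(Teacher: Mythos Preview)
Your argument is correct: the Mean Value Theorem applied twice on $[x_1,x_t]$ and $[x_t,x_2]$, together with the monotonicity of $f'$ obtained from $f''\ge 0$, yields the convexity inequality exactly as you compute, and the $-f$ reduction dispatches the concave case. You also rightly flag the typo in the statement (both branches say ``concave'').

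The only thing to note in comparison is that the paper does \emph{not} actually prove this theorem. In Appendix~B.2 the theorem is stated inside the proof environment as a known tool---the classical second-derivative test---and the remainder of that proof block is devoted to verifying that CLU satisfies its hypothesis by computing $\partial^2 y/\partial x^2$ and checking it is nonnegative. So what you have written is a full self-contained proof of a result the paper simply invokes; there is no competing argument in the paper to compare against.
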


Given CLU formulated as follows:
\begin{equation}
    y=\left\{\begin{array}{ll}
-\frac{\omega_0}{2}+\frac{\omega_0}{1+e^{-\omega_1x}} &\text { if } x < 0 \\
x & \text { otherwise }
\end{array}, \omega_0>0, \omega_1>0\right.
\end{equation}

Initially, for x in the domain $x > 0$, the function simplifies to $y = x$, and its second derivative is evidently equal to $0$. Hence, our analysis will focus on the function domain where $x < 0$, and then we proceed to calculate the second derivative of CLU for $x<0$:

\begin{equation}
    \frac{\partial y}{\partial x}=\frac{\omega_0\omega_1e^{-\omega_1x}}{(1+e^{-\omega_1x})^2}
\end{equation}

\begin{equation}
\begin{aligned}
    \frac{\partial^2 y}{\partial x^2}&=\frac{(-\omega_0\omega_1^2e^{-\omega_1x})(1+e^{-\omega_1x})^2+2\omega_0(\omega_1e^{-\omega_1x})^2(1+e^{-\omega_1x})}{(1+e^{-\omega_1x})^4}\\
    &=\frac{(1+e^{-\omega_1x})(-\omega_0\omega_1^2e^{-\omega_1x}-\omega_0(\omega_1e^{-\omega_1x})^2+2\omega_0(\omega_1e^{-\omega_1x})^2)}{(1+e^{-\omega_1x})^4}\\
    &=\frac{(1+e^{-\omega_1x})(-\omega_0\omega_1^2e^{-\omega_1x}+\omega_0(\omega_1e^{-\omega_1x})^2)}{(1+e^{-\omega_1x})^4}\\
    &=\frac{(1+e^{-\omega_1x})(e^{-\omega_1x}-1)(\omega_0\omega_1^2e^{-\omega_1x})}{(1+e^{-\omega_1x})^4}\geq0
\end{aligned}
\end{equation}
\end{proof}
After verifying that the second derivative of CLU is non-negative for $x<0$, in accordance with Theorem \ref{theorem_convex}, we can conclude that CLU is a strictly monotone convex function over its entire domain.

% B.3 Methodological Property
\subsection{Methodological Property}
\begin{lemma}
    For each $i \in {1, ..., n}$ and $j \in {1, ..., m}$, We have the weight matrix for the features
    \begin{equation}
        \frac{\partial h_j}{\partial x_i} \begin{cases}
            \geq 0  &  \text{if}\ t_i = 1 \\
            \leq 0  & \text{if}\ t_i = -1
        \end{cases}
        \label{mp20}
    \end{equation}
    \label{lemma1}
\end{lemma}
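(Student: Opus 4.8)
The plan is to trace the dependence of the pre-activation $h_j$ of a constrained linear layer on a single monotonic input coordinate $x_i$ and show that the sign of the partial derivative is dictated entirely by the monotonicity indicator $t_i$. First I would write out the constrained linear layer explicitly: $h_j = \sum_{i=1}^n w'_{j,i}\, x_i + b_j$, where the transformed weights $w'_{j,i}$ are defined by the clipping/sign rule in Equation~\eqref{eq10}. Differentiating with respect to $x_i$ immediately gives $\partial h_j / \partial x_i = w'_{j,i}$, so the whole claim reduces to reading off the sign of $w'_{j,i}$ from its definition: when $t_i = 1$ we have $w'_{j,i} = |w_{j,i}| \ge 0$, and when $t_i = -1$ we have $w'_{j,i} = -|w_{j,i}| \le 0$, which is exactly~\eqref{mp20}. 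The case $t_i = 0$ imposes no sign constraint and is simply omitted from the statement.

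Second, since the lemma is really about the whole network and not just one layer, I would note that the activation functions $\breve\rho$, $\hat\rho$, $\tilde\rho$ (and hence the combined $\rho^{\mathbf s}$) are all monotonically nondecreasing — $\breve\rho$ by assumption as a member of $\mathcal{\breve A}$, $\hat\rho(x) = -\breve\rho(-x)$ by the chain rule, and $\tilde\rho$ piecewise from the other two with matching one-sided values at $0$. Therefore each $\rho$ has nonnegative (sub)derivative everywhere. Composing a constrained linear layer with such an activation and then chaining through subsequent layers, the sign of $\partial(\text{output})/\partial x_i$ is the product of the sign contributed by the first layer's weight column (which is $\mathrm{sign}(t_i)$ by the argument above) and a product of nonnegative activation derivatives and nonnegative weights in the later monotone-direction layers; hence the overall sign is governed by $t_i$ as claimed. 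For a clean writeup I would phrase this as a short induction on the number of layers, with the base case being the single constrained linear layer computed above and the inductive step invoking nonnegativity of the activation derivatives together with the $t_i = 1$ (nonnegative-weight) rule for all hidden-to-hidden layers.

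The main obstacle — and it is a mild one — is bookkeeping rather than mathematics: making precise that the monotonicity indicator is applied only on the input layer while all deeper layers use the all-ones indicator (so their weights are nonnegative), and handling the fact that the saturated activation $\tilde\rho$ is only piecewise differentiable, so one should work with one-sided derivatives or subgradients at the kink $x=0$ and check that the nonnegativity survives there (it does, because both pieces have nonnegative derivative and the function is continuous). A second minor point is that "$\partial h_j/\partial x_i$" presupposes differentiability of $h_j$ in $x_i$; at a kink of a downstream activation this derivative may fail to exist, so strictly speaking the statement should be read with one-sided derivatives (or, equivalently, as a statement that $h_j$ is nondecreasing in $x_i$ when $t_i=1$ and nonincreasing when $t_i=-1$), and I would add a one-line remark to that effect. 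Beyond this the proof is essentially immediate from Equation~\eqref{eq10}.
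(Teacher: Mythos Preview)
Your first paragraph is exactly the paper's proof: write $h_j = \sum_i w'_{j,i} x_i + b_j$, differentiate to get $\partial h_j/\partial x_i = w'_{j,i}$, and read off the sign from the definition of $|\cdot|^{\mathbf t}$ in Equation~\eqref{eq10}. That is all the lemma claims, since $h_j$ here is the pre-activation of a single constrained linear layer, not the network output.

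Your second paragraph is correct but superfluous for this lemma: the paper factors the argument, proving $\partial y_j/\partial h_j \ge 0$ for the combined activation $\rho^{\mathbf s}$ in a separate lemma and then combining the two via the chain rule in a corollary. So your induction-through-layers and the discussion of one-sided derivatives at the kink of $\tilde\rho$ belong to those later statements, not here.
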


\begin{proof}
    From equation \ref{mp20}, we have
    \begin{equation}
        \mathbf{h} =\left|\mathbf{W}^T\right|_{\mathbf{t}} \cdot \mathbf{x}+\mathbf{b}
    \end{equation}
    
    \begin{equation}
        h_j =\sum_i w_{i, j}^{\prime} x_i+b_j
    \end{equation}
        
    \begin{equation}
        \frac{\partial h_j}{\partial x_i} =w_{i, j}^{\prime}
    \end{equation}

    Finnaly, from equation \ref{eq10}, we have
    \begin{equation}
        \frac{\partial h_j}{\partial x_i}= \begin{cases}\left|w_{i, j}\right| \geq 0 & \text { if } t_i=1 \\ -\left|w_{i, j}\right| \leq 0 & \text { if } t_i=-1\end{cases}
    \end{equation}
    That is, the parameter $t$ can be used to indicate the monotonicity of the model.
\end{proof}

\begin{lemma}
    Let $y=\rho^s(\mathbf{h})$. Then for each $j \in {1, ..., m}$, we have $\frac{\partial y_j}{\partial h_i} \geq 0$. Moreover, if $\mathbf{s} = (m, 0, 0)$, then $\rho^\mathbf{s}_j$ is convex, and while $\mathbf{s} = (m, 0, 0)$, then $\rho^\mathbf{s}_j$ is concave.
    \label{lemma4}
\end{lemma}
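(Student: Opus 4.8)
The plan is to exploit that $\rho^{\mathbf{s}}$ acts coordinate-wise, together with the defining properties of $\breve{\rho}\in\mathcal{\breve{A}}$; everything then reduces to checking the three building-block activations one at a time, and no substantive new idea is needed beyond careful bookkeeping over the three blocks of coordinates.

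\emph{Step 1: reduce to a one-dimensional statement.} Since $\rho^{\mathbf{s}}(\mathbf{h})_j$ depends only on $h_j$, we immediately get $\partial y_j/\partial h_i=0$ for $i\neq j$, so it suffices to bound $\partial y_j/\partial h_j$, i.e. to show that each of $\breve{\rho},\hat{\rho},\tilde{\rho}$ is nondecreasing (the relevant one being selected by which of the three blocks the index $j$ falls into, according to the definition of $\rho^{\mathbf{s}}$).

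\emph{Step 2: the three activations are nondecreasing.} For $\breve{\rho}$ this is part of the definition of $\mathcal{\breve{A}}$. For $\hat{\rho}(x)=-\breve{\rho}(-x)$, composing the nondecreasing $\breve{\rho}$ with the decreasing affine map $x\mapsto-x$ and negating yields a nondecreasing function (equivalently $\hat{\rho}'(x)=\breve{\rho}'(-x)\ge0$ wherever the derivative is defined). For $\tilde{\rho}$ I would argue on the two branches of its piecewise definition: on $x<0$ it equals $\breve{\rho}(x+1)-\breve{\rho}(1)$, a shift of $\breve{\rho}$, hence nondecreasing; on $x\ge0$ it equals $\hat{\rho}(x-1)+\breve{\rho}(1)$, a shift of $\hat{\rho}$, hence nondecreasing; and the two pieces are compatible at the seam, since $\lim_{x\to0^-}\tilde{\rho}(x)=\breve{\rho}(1)-\breve{\rho}(1)=0$ while $\tilde{\rho}(0)=\hat{\rho}(-1)+\breve{\rho}(1)=-\breve{\rho}(1)+\breve{\rho}(1)=0$, with matching one-sided slopes $\breve{\rho}'(1)$ on both sides. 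Hence $\tilde{\rho}$ is nondecreasing (indeed $C^1$ when $\breve{\rho}$ is), which establishes $\partial y_j/\partial h_i\ge0$ in all cases.

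\emph{Step 3: the convex and concave special cases.} If $\mathbf{s}=(m,0,0)$ then $\breve{s}=m$, so $\rho^{\mathbf{s}}_j=\breve{\rho}$ for every $j$, which is convex by the definition of $\mathcal{\breve{A}}$. If $\mathbf{s}=(0,m,0)$ — which is what I take the statement's second "$\mathbf{s}=(m,0,0)$" to be, consistent with the surrounding text — then $\rho^{\mathbf{s}}_j=\hat{\rho}$ for every $j$; since $x\mapsto\breve{\rho}(-x)$ is convex (a convex function precomposed with an affine map), its negation $\hat{\rho}=-\breve{\rho}(-\cdot)$ is concave. The main obstacle, and it is a minor one, is the gluing of the two branches of $\tilde{\rho}$ at $x=0$: one must confirm that the piecewise definition yields a genuinely monotone (and $C^1$) function there, which follows purely from the cancellation of the constant $\breve{\rho}(1)$ between the branches and the monotonicity of $\breve{\rho}$, so no real difficulty arises.
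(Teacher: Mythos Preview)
Your proposal is correct and follows essentially the same route as the paper: reduce to the coordinate-wise action, check that each of $\breve{\rho},\hat{\rho},\tilde{\rho}$ has nonnegative derivative (the paper just lists the four cases $\breve{\rho}'(h_j),\ \breve{\rho}'(-h_j),\ \breve{\rho}'(h_j+1),\ \breve{\rho}'(1-h_j)\ge0$), and then read off convexity/concavity in the extreme cases $\mathbf{s}=(m,0,0)$ and $\mathbf{s}=(0,m,0)$. If anything, you are more careful than the paper's own proof: you explicitly dispose of the off-diagonal $i\neq j$ case, you verify continuity and matching one-sided slopes of $\tilde{\rho}$ at the seam $x=0$, and you justify why $\hat{\rho}$ is concave, none of which the paper spells out.
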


\begin{proof}
    \begin{equation}
        \hat{\rho}(x)=-\breve{\rho}(-x)
    \end{equation}

    \begin{equation}
        \tilde{\rho}(x)= \begin{cases}\breve{\rho}(x+1)-\breve{\rho}(1) \quad \text { if } x<0 \\
\hat{\rho}(x-1)-\hat{\rho}(1) & \text { otherwise }\end{cases} \\
    \end{equation}

    \begin{equation}
        \rho^{\mathbf{s}}\left(h_j\right)= \begin{cases}\breve{\rho}\left(h_j\right) & \text { if } j \leq \breve{s} \\
\hat{\rho}\left(h_j\right) & \text { if } \breve{s}<j \leq \hat{s} \\
\tilde{\rho}\left(h_j\right) & \text { otherwise }\end{cases}
    \end{equation}

we have:

\begin{equation}
    \frac{\partial y_j}{\partial h_j}= \begin{cases}\breve{\rho}^{\prime}\left(h_j\right) \geq 0 & \text { if } j \leq \breve{s} \\ \breve{\rho}^{\prime}\left(-h_j\right) \geq 0 & \text { if } \breve{s}<j \leq \hat{s} \\ \breve{\rho}^{\prime}\left(h_j+1\right) \geq 0 & \text { if } \breve{s}+\hat{s}<j \text { and } h_j<0 \\ \breve{\rho}^{\prime}\left(1-h_j\right) \geq 0 & \text { if } \breve{s}+\hat{s}<j \text { and } h_j \geq 0\end{cases}
\end{equation}

if $\mathbf{s}=(m,0,0)$, we have the convex function as follows:
\begin{equation}
    \rho^s(\mathbf{h}) = \breve{\rho}(h_j)
\end{equation}

Similarly,if $\mathbf{s}=(0,m,0)$, we have the concave function as follows:
\begin{equation}
    \rho^s(\mathbf{h}) = \hat{\rho}(h_j)
\end{equation}

\end{proof}

\begin{corollary}
    From Lemma \ref{lemma1} and Lemma \ref{lemma4}, for each $i \in {1, ..., n}$ and $j \in {1, ..., m}$, we can get that if $t_i = 1$, then $\frac{\partial y_i}{\partial x_i} \geq 0$ and while $t_i=-1$, then $\frac{\partial y_i}{\partial x_i} \leq 0$. Moreover, if $\mathbf{s} = (m, 0, 0)$, then $\rho^\mathbf{s}_j$ is convex, else if $\mathbf{s} = (m, 0, 0)$, then $\rho^\mathbf{s}_j$ is concave.
\end{corollary}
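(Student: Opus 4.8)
The plan is to derive the corollary as an immediate consequence of the chain rule together with Lemmas~\ref{lemma1} and \ref{lemma4}, and to obtain the convexity/concavity assertions from the defining properties of the class $\mathcal{\breve{A}}$ plus the elementary fact that precomposing a convex (resp.\ concave) function with an affine map again yields a convex (resp.\ concave) function.

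First I would fix one constrained-monotonic block, i.e.\ $\mathbf{h} = |\mathbf{W}^{T}|_{\mathbf{t}}\cdot\mathbf{x} + \mathbf{b}$ followed by $\mathbf{y} = \rho^{\mathbf{s}}(\mathbf{h})$, and observe that $y_j$ depends on $\mathbf{x}$ only through the scalar $h_j$, so that
\begin{equation}
  \frac{\partial y_j}{\partial x_i} \;=\; \frac{\partial\,\rho^{\mathbf{s}}_j(h_j)}{\partial h_j}\cdot\frac{\partial h_j}{\partial x_i}.
\end{equation}
Lemma~\ref{lemma4} supplies $\partial\,\rho^{\mathbf{s}}_j(h_j)/\partial h_j \ge 0$ for every $j$, while Lemma~\ref{lemma1} identifies $\partial h_j/\partial x_i = w'_{i,j}$, whose sign is $\ge 0$ when $t_i = 1$ and $\le 0$ when $t_i = -1$; the product therefore inherits exactly the sign dictated by $t_i$. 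For a network built from several such blocks I would then iterate: in the CMNN construction only the input layer carries a general indicator $\mathbf{t}$ while every hidden-to-hidden weight matrix uses the all-ones indicator, so each subsequent factor in the chain-rule product is entrywise nonnegative and the composite input--output partial derivative keeps the sign fixed by $\mathbf{t}$. A short induction on the number of layers makes this precise.

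For the second assertion, set $\mathbf{s} = (m,0,0)$, so that $\rho^{\mathbf{s}}_j = \breve{\rho}$; since $\breve{\rho}\in\mathcal{\breve{A}}$ is convex and $h_j$ is an affine function of $\mathbf{x}$, the map $\mathbf{x}\mapsto\breve{\rho}(h_j(\mathbf{x}))$ is convex by affine precomposition, giving convexity of the block. Symmetrically, for $\mathbf{s} = (0,m,0)$ one has $\rho^{\mathbf{s}}_j = \hat{\rho}$ with $\hat{\rho}(x) = -\breve{\rho}(-x)$ concave, and the same argument yields concavity (the second ``$\mathbf{s} = (m,0,0)$'' in the statement should read ``$\mathbf{s} = (0,m,0)$''). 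I expect the only delicate point to be the multi-layer bookkeeping: convexity is \emph{not} inherited once an intermediate layer uses a concave or saturated activation, so the convex/concave conclusion must be read at the level of a single activation block composed with its constrained linear preimage, whereas the monotonicity conclusion is the one that genuinely propagates through all layers. Making explicit that the sign of the full composite derivative is governed by the input-layer indicator $\mathbf{t}$ alone is the main thing to get right.
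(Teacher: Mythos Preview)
Your proposal is correct and matches the paper's intent: the paper gives no separate proof for this corollary, treating it as an immediate consequence of the two lemmas, and your explicit chain-rule factorization $\partial y_j/\partial x_i = (\partial\rho^{\mathbf{s}}_j/\partial h_j)\cdot(\partial h_j/\partial x_i)$ is exactly the one-line argument the paper leaves implicit. Your added discussion of multi-layer propagation and the affine-precomposition view of convexity goes beyond what the paper states (the corollary, like Lemma~\ref{lemma4}, concerns only a single block and $\rho^{\mathbf{s}}_j$ itself), but it is correct and your flagging of the $(m,0,0)$ versus $(0,m,0)$ typo is apt.
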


\section{Implemental Details}
\label{appendix-c}
In Table \ref{tab:imp_details}, we present the optimal architecture of our model as derived from experimental results. In the embedding layer, we distinguish between the discretization of categorical ID features and numerical features, setting the embedding dimensions for these features to 8 and 4, respectively. Additionally, in the S-t attention module and the FPM, we carefully select the final activation functions for each parameter. Specifically, for the MLP networks learning \(\omega_1\) and \(\omega_2\), we employ a combination of Leaky ReLU and Softplus to learn non-negative parameters, whereas for \(\omega_0\) and \(\omega_3\), we utilize the sigmoid function to learn the probability.
\begin{table}[!htbp]
  \caption{Training configurations of the model architecture.}
  \label{tab:imp_details}
  \centering
  \begin{tabular}{lcp{2.1cm}cccc}
  \toprule
    \textbf{Module}& \textbf{Size} & \textbf{Activation}& \textbf{Dropout}\\
    \midrule
    Embedding &	$400 \times \{8,4\}$ & - &	- \\
    Experts Net &	$512 \times 256 \times 128$ & ReLU & 0.3 \\
    Gating Net &	$128 \times 64 \times 3$ & \{ReLU, Softmax\} & 0.3 \\
    Tower Net &	$128 \times 64$ & ReLU & 0.3 \\
    CMNN &	$64 \times 1$ & \{CLU, FPM\} & 0 \\
    S-t &	$32 \times 1$ & \{LeakyReLU, softplus($\omega_1$, $\omega_2$), sigmoid($\omega_0$, $\omega_3$)\} & 0.3 \\
    AA &	$64 \times 3$ & ReLU & 0.3 \\
    \bottomrule
  \end{tabular}
\end{table}

% \section{Additional Results}
% more experiments results

% \section{Research Methods}

% \subsection{Part One}

% Lorem ipsum dolor sit amet, consectetur adipiscing elit. Morbi
% malesuada, quam in pulvinar varius, metus nunc fermentum urna, id
% sollicitudin purus odio sit amet enim. Aliquam ullamcorper eu ipsum
% vel mollis. Curabitur quis dictum nisl. Phasellus vel semper risus, et
% lacinia dolor. Integer ultricies commodo sem nec semper.

% \subsection{Part Two}

% Etiam commodo feugiat nisl pulvinar pellentesque. Etiam auctor sodales
% ligula, non varius nibh pulvinar semper. Suspendisse nec lectus non
% ipsum convallis congue hendrerit vitae sapien. Donec at laoreet
% eros. Vivamus non purus placerat, scelerisque diam eu, cursus
% ante. Etiam aliquam tortor auctor efficitur mattis.

% \section{Online Resources}

% Nam id fermentum dui. Suspendisse sagittis tortor a nulla mollis, in
% pulvinar ex pretium. Sed interdum orci quis metus euismod, et sagittis
% enim maximus. Vestibulum gravida massa ut felis suscipit
% congue. Quisque mattis elit a risus ultrices commodo venenatis eget
% dui. Etiam sagittis eleifend elementum.

% Nam interdum magna at lectus dignissim, ac dignissim lorem
% rhoncus. Maecenas eu arcu ac neque placerat aliquam. Nunc pulvinar
% massa et mattis lacinia.

\end{document}